\theoremstyle{plain}
\newtheorem{theorem}{Theorem}[]
\newtheorem{proposition}[theorem]{Proposition}
\newtheorem{lemma}[theorem]{Lemma}
\theoremstyle{definition}
\theoremstyle{remark}
\theoremstyle{example}
\newtheorem{example}[theorem]{Example}
\title{DRL-ORA: Distributional Reinforcement Learning with Online Epistemic Risk Adaptation}
\author[1]{Yupeng Wu}{}
\author[2]{Wenyun Li}
\author[2]{Wenjie Huang}
\author[3]{Chin Pang Ho}
\affil[1]{%
London Business School
}
\affil[2]{%
The University of Hong Kong
}
\affil[3]{%
City University of Hong Kong
  }
\begin{document}
\maketitle

\begin{abstract}
One of the main challenges in reinforcement learning (RL) is that the agent has to make decisions that would influence the future performance without having complete knowledge of the environment. Dynamically adjusting the level of epistemic risk during the learning process can help to achieve reliable policies in safety-critical settings with better efficiency. In this work, we propose a new framework, \emph{Distributional RL with Online Epistemic Risk Adaptation} (DRL-ORA). This framework quantifies both \emph{epistemic} and implicit \emph{aleatory} uncertainties in a unified manner and dynamically adjusts the epistemic risk levels by solving a total variation minimization problem \emph{online}. The framework generalizes the existing variants of risk adaptation approaches with better explainability and flexibility.
The selection of risk levels is performed efficiently via 
a Follow-The-Leader-type algorithm.
We show that DRL-ORA outperforms existing methods that rely on fixed risk levels or manually designed risk level adaptation in multiple classes of tasks.

\end{abstract}

\section{Introduction}\label{sec:intro}
Reinforcement learning (RL) algorithms have shown great success in games and simulated environments \citep{mnih2015human}, and have drawn interest for real-world and industrial applications \citep{mahmood2018benchmarking,zhang2020cautious}. Taking a step further, distributional reinforcement learning (DRL) \citep{bellemare2017distributional} considers the intrinsic distribution of the returns rather than their expectation, as considered in standard RL settings. This approach allows agents to quantify the risk of their policies and enables the development of risk-aware policies by considering the distributional information of their performance \citep{dabney2018distributional}. Quantifying both aleatory and epistemic uncertainties of RL crucial for developing risk-aware policies. The former expresses the inherent \emph{randomness} in the problem, and the latter represents \emph{a lack of knowledge} about the environment \citep{mavrin2019distributional,clements2019estimating,charpentier2022disentangling}. However, most existing research has focused on risk-aware RL with a {\it fixed risk level}, such as investigating risk awareness in the return distribution \citep{lim2022distributional} and addressing both aleatoric and epistemic uncertainties \citep{eriksson2019epistemic,eriksson2022sentinel}.

In this paper, we investigate {\it dynamic risk-awareness} strategy with respect to epistemic uncertainty, 
which reflects the attitude toward exploring unknown environments. Modeling risk-awareness of epistemic uncertainty is particularly relevant for applications such as autonomous driving, where the systems are almost deterministic, and the primary source of uncertainty is the lack of information about the environment. A higher risk-aversion level can avoid the negative consequences of excessive exploration that may lead to unsafe decisions in real-life (pessimism-under-uncertainty). On the other hand, a lower risk-aversion level (including risk-seeking) encourages exploring and collecting more data in unexplored regions (optimism-under-uncertainty). Although standard DRL can learn a policy under different levels of risk-awareness and adapt to various environments that require different degrees of caution, a specific risk-awareness level typically needs to be pre-specified and fixed for each environment prior to deployment. Studies have shown that optimism and pessimism-under-uncertainty settings outperform each other dependent on the task at hand, highlighting the need for an adaptive risk-awareness strategy that can balance these seemingly opposing perspectives \citep{moskovitz2021tactical,wang2024improving,wang2025adaptive}. 

In addition, maintaining a fixed level of risk-awareness throughout the learning period is often practically suboptimal, as the level of risk concerned varies at different stages. An intelligent agent should be able to identify the appropriate level of risk required for effective decision-making at different stages, rather than simply following a fixed risk-aware policy \citep{schubert2021automatic,liu2022adaptive}. In the context of RL, problems where uncertainties are expected to diminish over time (e.g., epistemic uncertainty) should be assigned a decreasing risk-awareness level to avoid excessive conservativeness \citep{eriksson2019epistemic}. We illustrate this point further using a CartPole example. 
\begin{example}\label{ex:CartPole}
The CartPole task was trained through Implicit Quantile Network (IQN) with left-tail Conditional Value-at-Risk (CVaR) at level $\alpha\in (0,\,1]$ as its risk measure on the return (IQN will be explicitly described in Section 3). Over the episodes, the risk parameter $\alpha$ could either be fixed or manually adjusted. 
Figure~\ref{fig:CartPole_intro} shows that ``IQN alpha:191'' outperforms all other settings, including IQN with fixed risk levels. 
The intuition is: at early stage of training, set a high risk-awareness level ($\alpha=0.1$) to hedge the uncertain from the unknown environment, then move to a low risk-awareness level ($\alpha = 0.9$) to acquire high rewards as more experience from the environment is gained (and the epistemic uncertainty is reduced), and finally set back to a value in $[0,\,1]$ that acts optimally in hindsight. 
This result highlights the advantage of using varying risk levels and motivates the development of more advanced methods that can automatically adjust the risk level and achieve even better performance.
\end{example}
\begin{figure}[t]
\centering
    \includegraphics[width=0.45\textwidth]{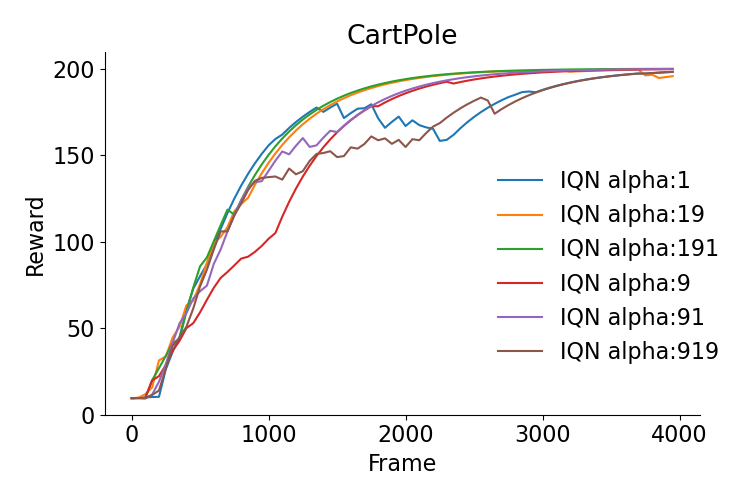}
    \caption{``IQN alpha:1'' represents that $\alpha$ is fixed at $0.1$ throughout all episodes. ``IQN alpha:191'' means that $\alpha$ is manually adjusted over the episodes, linearly increasing from $0.1$ to $0.9$ and then linearly decreasing back to $0.1$. Same interpretation applies to the other settings.}
    \label{fig:CartPole_intro}
\end{figure}

In this paper, we propose a DRL framework that is capable of automatically adjusting the risk-awareness level \emph{on the fly} (i.e., online) by the epistemic uncertainty quantification in each period. The proposed framework does not rely on any pre-specified risk-awareness level and it can be incorporated with a wide range of risk measures based on IQN. Our framework generalizes and improves the existing adaptive risk-awareness level approach \citep{moskovitz2021tactical,liu2022adaptive,wang2024improving,wang2025adaptive}. 
Dynamic risk level selection methods are beneficial for RL algorithms because predetermining a suitable risk-awareness level 
is difficult, if not impossible, for a new domain task without any prior knowledge.
Our experiments demonstrate that the proposed approach, which requires only moderate extensions to standard RL algorithms, achieves superior performance across various classes of practical problems compared to risk-neutral methods, fixed risk-awareness level methods, and existing adaptive risk-awareness level methods.





\section{Related Work}

The literature on DRL has significantly expanded in recent years \citep{morimura2010parametric, bellemare2017distributional, barth2018distributed, dabney2018implicit, yang2019fully, singh2020improving}. This approach has been used to enhance exploration in RL \citep{mavrin2019distributional} and in risk-aware applications \citep{bernhard2019addressing}. Many different types of risk measures have been used in risk-aware DRL including CVaR \citep{lim2022distributional, choi2021risk, keramati2020being, yang2022safety}, power-law risk measure \citep{choi2021risk}, distortion risk measure \citep{dabney2018implicit}, entropic risk measure \citep{hai2022rasr} and cumulative prospect theory \citep{prashanth2016cumulative}. Research has also been conducted on quantifying the risk of epistemic uncertainty, which demonstrates the attitude towards unexplored areas \citep{clements2019estimating,eriksson2019epistemic,tang2018exploration,o2023efficient}, and on establishing comprehensive composite risk, which quantifies the joint effect of aleatory and epistemic risk during the learning process \citep{eriksson2022sentinel,cho2023pitfall}.

The above literature focuses on \emph{fixed} risk-awareness levels. Recently, there has been growing evidence supporting the \emph{adaptive} risk-awareness algorithms, such as risk scheduling approaches that explore risk levels and optimistic behaviors from a risk perspective \citep{oh2022risk}. These methods are not situation-dependent, however, risk levels decay linearly from high to specific low levels during the learning process. 
In addition, frameworks for value estimation, such as Tactical Optimism and Pessimism (TOP) and Quantile Option Architecture (QUOTA) \citep{moskovitz2021tactical,wang2024improving,wang2025adaptive,zhang2019quota}, have been developed to balance the degree of optimistic and pessimistic uncertainty quantification. Many of these frameworks formulate the problem as a multi-armed bandit problem and utilize the Exponentially Weighted Average Forecasting (EWAF) algorithm.
To implement this method, a pre-specified degree level should be set based on expert recommendations. 
A similar EWAF algorithm is used for adaptive risk tendency (ART) 
of CVaR in an IQN framework for a Nano Drone navigation application \citep{liu2022adaptive}, and it achieves superior reward accumulation and efficient computational performance. The drawback of this method appears during evaluation in environments with high obstacle density, possibly because it relies solely on the current observation without utilizing any memory. Besides, EWAF is not intuitive 
, and tuning the learning rate for weight forecasting still requires extra effort. In another work \citep{schubert2021automatic}, risk levels are adjusted based on a random network distillation (RND) error, which provides an estimate of how frequently a state has been visited during training. While the performance heavily depends on the original RND architecture, it is possible that a more conservative risk setting could hinder exploration.
\section{Preliminaries}
\paragraph{Distributional Reinforcement Learning (DRL)} Consider a Markov Decision Process (MDP) \citep{puterman2014markov} defined as a tuple $(\mathbb{S},\mathbb{A},R,P,\gamma)$, where $\mathbb{S}$ is the state space, $\mathbb{A}$ the action space, $R$ is the (state and action-dependent) reward function, $P(\cdot|s,\,a)$ is the transition probability associated with state-action pair $(s,\,a)$, and $\gamma\in(0,1]$ is the discounted factor. A policy $\pi(\cdot|s)$ maps the state $s$ to a probability distribution over actions.

DRL \citep{bellemare2017distributional} introduces a new perspective other than the standard RL setting, by learning the entire return distribution of the action-value function $Q(s,\,a)$ directly instead of estimating only the expected return $Q(s,\,a)$. This approach allows the agent to capture more detailed information beyond a single expectation value. Given a policy $\pi$, the distributional Bellman equation is defined as
\[Z(s,\,a):\stackrel{D}{=}R(s,\,a)+\gamma Z\left(s^{\prime},\,a^{\prime}\right),\]
for $s^{\prime} \sim P(\cdot \mid s,\,a),\, a^{\prime}\sim \pi(\cdot|s^\prime)$, where $Z$ is the distributional value function and $\stackrel{D}{=}$ represents the equality in distribution.
Its corresponding optimality equation is defined as 
\[Z(s,\,a):\stackrel{D}{=} R(s,\,a)+\gamma Z\left(s^{\prime},\,\arg \max_{a^{\prime}\in\mathbb{A}} \mathbb{E} \left[Z\left(s^{\prime},\,a^{\prime}\right)\right]\right),\]
for $s^{\prime} \sim P(\cdot \mid s,\,a)$.
It is proved by \citet{bellemare2017distributional} that the distributional Bellman operator is a contraction in a maximal form of the Wasserstein
metric. There are approaches \citep{bellemare2017distributional,dabney2018distributional,dabney2018implicit,yang2019fully,rowland2019statistics} to estimate the return distribution, most of which are based on Quantile Regression \citep{koenker1978regression}. 

The Implicit Quantile Network (IQN) proposed by \citet{dabney2018implicit} provides an effective way to learn an implicit representation of the return distribution, yielding a powerful function
approximator for a Deep $Q$-Network agent. IQN can re-parameterize samples from a based distribution to the respective quantile value of a target distribution, which produces risk-aware policies. Let $\beta: [0,\,1] \to [0,\, 1]$ be a \emph{distortion} function, and choosing $\beta$ as the identity function 
corresponds to risk-neutrality. Let $Z_{q}(s,\,a)$ denote the $q$-quantile function (inverse of cumulative density function) of the return distribution, then the distorted expectation of $Z(s,\,a)$ under $\beta$ and its risk-aware greedy policy are denoted as
\begin{align}
Q_{\beta}(s,\,a)&:=\mathbb{E}_{q\sim U[0,1]}[Z_{\beta(q)}(s,\,a)], \label{distorted} \\
\pi_{\beta}(s)& =  \arg \max_{a\in\mathbb{A}}Q_{\beta}(s,\,a). \nonumber
\end{align}
For brevity, the IQN loss function and its detailed training process are summarized in Appendix Section A.

\paragraph{Adaptive Risk-awareness (Risk Tendency)} The proposed adaptive risk-awareness approach treats the selection of an exploration strategy (risk awareness level) as a multi-armed bandit problem \citep{moskovitz2021tactical, wang2024improving, wang2025adaptive}, where each bandit arm corresponds to a specific choice of distortion function $\beta$. Define a discrete set of $D$ distortion functions $\mathcal{D}=\{\beta^1,...,\beta^D\}$ recommended by experts. After sampling $d_n \in \{1,...,D\}$ at epsiode $n$, we form a belief distribution $p_n \in \Delta_{\mathcal{D}}$ ($\Delta_{\mathcal{D}}$ is the probability simplex on $\mathcal{D}$) of the form $p_n(d) \propto \exp(w_n(d))$ for all $d\in \mathcal{D}$. The distribution is updated using the Exponentially Weighted Average Forecasting (EWAF) algorithm based on a received feedback signal in each episode. After receiving a feedback signal $g_n: = R_n-R_{n-1}$ where $R_n$ is the cumulative reward obtained in episode $n$, the weight $w_n$ is updated by:
\[
w_{n+1}(d) = 
\begin{cases}
w_{n}(d) + \eta \cdot (g_n/p_n(d)), & d= d_n,
\\
w_{n}(d), & \text{otherwise},
\end{cases}
\]
where $\eta >0$ is the step-size. In each episode $n$, the $\beta$ in Eq.(\ref{distorted}) is chosen by random sampling from distribution $p_n$.

Another variants of constructing the feedback signal is the right-truncated variance difference of reward distributions between two successive periods (time-steps) \citep{liu2022adaptive}, where $g_t := \text{RTV}_t - \text{RTV}_{t-1}$ and 
\[
\text{RTV}_t: = \frac{2}{N} \sum_{i=1}^{N/2} (F^{-1}_{Z_t} (q_i) - F^{-1}_{Z_t} (q_{N/2}))^2,
\]
in which $q_i$ is the $i/N$th quantile levels and $N$ is a sample size. The EWAF algorithm is applied to update the weight $w_t$ in periodic frequency.

\section{Methodology: DRL-ORA}
{
We propose DRL-ORA, the first distributional reinforcement learning framework capable of online, state-action-level adaptation of epistemic risk attitudes without requiring pre-specified risk levels or manual scheduling. At its core, DRL-ORA introduces a unified uncertainty quantification scheme that explicitly disentangles epistemic from aleatory uncertainty via ensemble networks, and formulates risk adaptation as a non-convex online learning problem with a well-defined regret objective. This marks a fundamental departure from prior approaches—such as EWAF-based bandit selection—which lack explainability, rely on predefined discrete risk sets, and fail to leverage full distributional information. In contrast, DRL-ORA enables fine-grained, per-transition risk adaptation, grounded in a total variation minimization criterion that is both theoretically tractable and computationally efficient. The framework is agnostic to the choice of distortion risk measures, supports both CVaR and quantile-based variants, and establishes a novel connection to satisficing measures in decision theory—revealing that the offline oracle in our formulation corresponds to a quasi-concave satisficing objective. Together, these innovations yield a flexible, interpretable, and high-performing risk adaptation paradigm that consistently outperforms fixed-risk and existing adaptive baselines across diverse tasks.
}
\paragraph{A Generalized Non-convex Learning Perspective} 
In {{the existing}} EWAF approach, the major shortcoming is its explainability. The theoretical foundation/objective of the approach in updating the {{risk tendency}} is unclear. The most similar framework to the approach is online optimization on simplex \citep{bubeck2011introduction}, where the signals must be an explicit loss function in the decision variable, rather than simple numerical outputs. There are also several other shortcomings. In this approach, epistemic uncertainty is not disentangled from the reward distribution, which means that the agent cannot specify the risk-awareness level towards exploration as it will further lead to inaccurate adaptation. Besides, it relies on truncated variance or cumulative reward, rather than leveraging the full information from the epistemic uncertainty distribution, to construct the feedback signal. Furthermore, this approach requires a predefined discrete finite set of distortion functions (e.g., only two functions are used in \citep{liu2022adaptive}). 

Instead, we propose a non-convex learning perspective that constructs the loss function as the feedback signal, generalizing the variants of the existing approaches, and offering greater explainability, flexibility and improved accuracy. Given $T$ as the total periods (time-steps) of training, we then define a loss function $l_t: \mathcal{A} \rightarrow \mathbb{R}$ for all $t = 1, \ldots, T$. Our feedback signal at period $t$ is:
\begin{align}
 & l_t(\alpha(s,\,a)) \nonumber
 \\
 := & \big| \rho_{\alpha(s,\,a)}(X_t(s,\,a)) - \rho_{\alpha(s,\,a)}(X_{t+1}(s,\,a)) \big|, \label{signal}   
\end{align}
where $X_t(s,\,a)$ represents the epistemic uncertainty associated with the state-action pair $(s,\,a)$ at period $t$. Here we define the parametric risk measure on the epistemic uncertainty: $\rho_{\alpha}: \mathcal{L} \rightarrow \mathbb{R}$, where $\mathcal{L}$ represents the space of bounded random variables, $\alpha \in \mathcal{A}$ is a parameter that controls the level of risk-awareness, and $\mathcal{A} \subset \mathbb{R}$ is a compact set containing all possible risk parameters. Explicitly, $\alpha(s,\,a) \in \mathcal{A}$ denotes the risk parameter chosen corresponding to the state-action pair $(s,\,a)$. 

The risk measure $\rho_{\alpha}(\cdot)$ is Lipschitz continuous but not necessarily convex with respect to $\alpha$. 
Distortion risk measures \citep{wang1996premium,dabney2018distributional} fall under our framework and are well-suited for the IQN framework. Specifically, for $X \in \mathcal{L}$, we have
\[\rho_{\alpha}(X) := \int_{-\infty}^{0}[g_{\alpha}(S_{X}(x))-1]dx + \int_{0}^{\infty} g_{\alpha}(S_{X}(x))dx,\] 
where $g_{\alpha}$ is the distortion function parameterized by $\alpha$, and $S_{X}(x)$ is the survival distribution function of random variable $X$. Suppose $g_{\alpha}$ is a bounded and Lipschitz continuous function with respect to $\alpha$. Typically, when $X$ represents a random \emph{loss}, a larger risk parameter $\alpha$ indicates a lower level of risk-awareness. Below, we provide several examples along with their corresponding $g_{\alpha}$ and $\mathcal{A}$.

\begin{example}[\citep{acerbi2002coherence}]
For the right-tail CVaR, $\text{CVaR}_{\alpha}(X) = \mathbb{E}[X|X \geq F^{-1}(1-\alpha)]$, where $F^{-1}$ denotes the inverse CDF of $X$. The corresponding distortion function is $g_{\alpha}(x) = \max\left\{-x/\alpha + 1,\,0\right\}$, and $\mathcal{A} = [\alpha_{\min},\,1]$ for some $\alpha_{\min} \in (0, 1]$.
\end{example}
\begin{example}[\citep{dhaene2012remarks}]
For the right-tail $\alpha$-quantile, \emph{i.e.}, $\rho_{\alpha}(X) = F^{-1}_{X}(1-\alpha)$, the distortion function is $g_{\alpha}(x) = \mathbb{I}\{x \leq \alpha\}$, and $\mathcal{A} = [\alpha_{\min},\,1]$ for some $\alpha_{\min} \in (0, 1]$.
\end{example}

Our construction of the feedback signal (\ref{signal}) has several features and advantages:
\begin{enumerate}
    \item[\emph{(i)}]  Our feedback signal generalizes the existing approach by utilizing the full information of the epistemic uncertainty distribution. Also, the feedback signal is defined as an explicit function of the risk parameter, so the risk adaptation problem can be formally written as an online learning problem with clear objective and decision foundation.

    \item[\emph{(ii)}] Our approach provides higher flexibility, as the risk levels associated with different state-action pairs can be updated independently and simultaneously. Furthermore, the adaptation is performed in every period/transition rather than episodically, as in \citep{moskovitz2021tactical, wang2024improving, wang2025adaptive}. These will be validated later by experiments, demonstrating improved training performance during early periods and episodes.
    
    \item[\emph{(iii)}] The risk measure $\rho_\alpha(\cdot)$ focuses on the right $\alpha$-level quantile of a distribution, capturing the risk of left-truncated epistemic uncertainty (reflecting upper-tail variability), which makes it comparable with the setting of reward distribution RTV in \citep{liu2022adaptive}. It is inspired by setting a contrary to \citep{mavrin2019distributional}, which argues that the upper-tail variability of reward distribution reflects optimism of exploration in the face of uncertainty, particularly when the distribution is asymmetric.
\end{enumerate}
The offline objective, in hindsight, is formulated for each state-action pair $(s,\,a)$ as
\begin{equation}
    \min_{\alpha(s,\,a) \in \mathcal{A}} \sum_{t=1}^{T-1} l_t(\alpha(s,\,a)), \label{Obj}
\end{equation}
which can be interpreted as finding a state-action-dependent risk parameter that minimizes the \emph{Total Variation} \citep{chambolle2004algorithm, huang2008fast, li2013efficient} (i.e., the sum of feedback signals) of the epistemic uncertainty risk. 

The entire learning period under hindsight reveals the absolute level of risk awareness associated with a particular risk parameter that stabilizes the negative impact of epistemic uncertainty variation. However, the relative levels of risk awareness during the process cannot be explicitly elicited. At any period $t < T$, only limited information $(X_{1}, \ldots, X_{t+1})$ is available, allowing the best risk parameter to be forecasted on the fly using an appropriate online learning method. The ``regret'' of the online learning algorithm under the outputs $\alpha_1, \ldots, \alpha_T$ is defined as  
\[
\mathfrak{R}_T := \sum_{t=1}^{T} l_t(\alpha_t) - \min_{\alpha \in \mathcal{A}} \sum_{t=1}^{T} l_t(\alpha),
\]
where the dependence of $\alpha$ and $X_t$ on $(s,\,a)$ is omitted only for simplicity, as will be done in the following discussions.

\paragraph{Ensemble Networks for Epistemic Uncertainty Quantification}
Building on the principles of Bayesian inference \citep{mackay2003information}, the \emph{ensemble network} is a straightforward approach for modeling epistemic uncertainty and is commonly used in prior work \citep{dimitrakakis2006nearly, osband2016deep, agarwal2020optimistic}. Recall that in the regular RL framework, a neural network $\theta$ is used to estimate the $Q$-function $Q^\theta(s, a)$, where the distribution of $\theta$ reflects the current epistemic uncertainty during training \citep{eriksson2022sentinel}. While modeling the distribution of $\theta$ is challenging, its corresponding network outputs $Q^\theta(s,\,a)$ can be directly observed.

Thus, ensemble networks create $K$ network heads during training, each initialized with different parameters $\theta$, denoted by $\{\theta_i\}_{i=1}^K$. For each state-action pair $(s,\,a)$, these $K$ networks output $K$ different values $Q^{\theta_k}(s,\,a)$ for $k = 1, \ldots, K$. The distribution of $Q^{\theta_k}(s,\,a)$ is used to approximate the corresponding distribution of $\theta$, which represents the epistemic uncertainty at $(s,\,a)$. Formally, for each $(s,\,a)$, the ensemble networks generate $Q^{\theta_k}(s,\,a)$, and a distribution $Y$ maps $(s,\,a)$ to a uniform probability distribution supported on $\{Q^{\theta_k}(s,\,a)\}$, defined as: \[Y(s,\,a) = 1/K \sum_{k=1}^{K} \delta\{Q^{\theta_k}(s,\,a)\},\] where $\delta\{z\}$ is the Dirac delta function at $z \in \mathbb{R}$. Thus, the epistemic uncertainty associated with $(s,\,a)$ is represented by the distribution $Y(s,\,a)$. At time step $t-1$ in the RL process, given the explored state-action pair $(s_{t-1},\,a_{t-1})$ and the epistemic uncertainty distribution $X_{t-1}(s,\,a)$ for each $(s,\,a)$, the epistemic uncertainty distribution $X_{t}(s,\,a)$ is updated asynchronously as
\[
\begin{array}{ll}
X_{t}(s,\,a)\leftarrow Y(s_{t-1},\,a_{t-1}), & \textrm{if}\,(s,\,a)=(s_{t-1},\,a_{t-1}),\\
X_{t}(s,\,a)\leftarrow X_{t-1}(s,\,a), & \textrm{otherwise},
\end{array}
\]
where $Y(s_{t-1},\,a_{t-1})$ is the distribution generated by the ensemble networks. This distribution is supported by the $Q$-functions of each network, where each $Q$-value is computed as the quantile expectation over the return distribution (aleatoric uncertainty). 
It is important to note that the return distribution in each network is updated based on the output of its corresponding ensemble network. This ensures that the epistemic uncertainty is properly quantified and disentangled from the overall uncertainties. {{Appendix Figure~\ref{fig:ablation k} shows the tests on different ensemble size $K$. It demonstrates that the increase of ensemble network size does not exhibit a significant correlation with performance improvement. Both the time and memory overhead grow slightly faster than linear growth with
$K$. With the default value of $K$, this overhead is not significant.}}

We finally wrap-up the proposed algorithm -- \emph{Distributional RL with Online {{Epistemic}} Risk Adaptation (DRL-ORA)} in Algorithm 1. In each period $t$, the epistemic uncertainty is estimated using ensemble networks, where the return distribution is updated based on the output of each ensemble network (Line 9). The historical epistemic uncertainties are then used to dynamically update the risk parameter through the proposed {{online non-convex learning method (Line 10-11)}}. 
The composite risk framework quantifies the combined effects of aleatory and epistemic risks by applying separate risk-awareness levels to the epistemic uncertainty and the return distribution (which implicitly captures aleatory uncertainty) using distortion functions $g_{\alpha}$ and $\beta$, respectively (Line 5 \& 7). The action is chosen to minimize the epistemic uncertainty risk under the latest estimation of the parameter (Line 5).

\begin{algorithm}[tb]
\caption{DRL-ORA}
\begin{algorithmic}[1]
\STATE \textbf{Input:} Aleatory uncertainty distortion function $\beta$; epistemic uncertainty distortion function $g_{\alpha}$ with risk parameter $\alpha\in[\alpha_{\min},\,\alpha_{\max}] \subset [0,\,1]$ (the risk parameter is associated with each state-action pair) and corresponding risk measure $\rho_\alpha$; {{parameter of exponential distribution $\eta$, arbitrarily chosen $\alpha_{1}\in\mathcal{A}^\prime$}}, hyper-parameters: $M>0$, $K>0$, discounted factor $\gamma>0$; Training period $T>0$.
\STATE \textbf{Initialize:} The return distribution $Z^k$ for each ensemble network $k=1,...,K$, risk parameter $\alpha(s,\,a)=\alpha_{\max}$ for all $(s,\,a)$, initial state $s_{0}$ and initial action $a_{0}$, initial estimated epistemic uncertainty distribution $X_{1}(s,\,a)$ for all $(s,\,a)$.
\medskip
\FOR{$ t= 1$ {\bfseries to} $T$}
\STATE Set $\bar{\alpha}_{t}(s) =  \alpha_{t}(s,\,a_{t-1})$.
\STATE Choose an action by 
\[a_{t} \in \arg\min_{a\in\mathcal{A}}\rho_{\bar{\alpha}_{t}({s_t)}}(X_{t}(s_t,a)).\] Observe return $r_t$ and state transition $s_{t+1}$. 
\FOR{$ k= 1$ {\bfseries to} $K$}
\STATE Compute $Q^{\theta_{k}}(s,a) = \frac{1}{M}\sum_{m=1}^{M}Z^k_{\beta(\tilde{q}_{m})}(s,\,a)$ by $M$ samples of $\tilde{q}_{m}\sim U([0,\,1])$, where the return distribution $Z^k$ is updated based on the $k$th ensemble network's output.
\ENDFOR
\STATE Update the estimated epistemic uncertainty distribution $X_{t+1}(s,\,a)$ for all $(s,\,a)$ asynchronously.
\STATE Generate randomly $\sigma_{t+1}\overset{\mathrm{i.i.d}}{\sim} \exp{(\eta)}$;
\STATE Compute \[\alpha_{t+1} \in \arg \min_{\alpha \in\mathcal{A}^\prime } \left\{\sum_{i=1}^{t} l_i(\alpha) - \sigma_{t+1}\alpha \right\}.\]
\ENDFOR
\end{algorithmic}
\end{algorithm}
\paragraph{Regret Minimization and Analysis}
Given that our loss function is not necessarily convex in the risk parameter $\alpha$, standard online convex optimization algorithms cannot guarantee sublinear regret. The EWAF algorithm also fails in this setting, as our goal is to identify a specific risk parameter through continuous optimization rather than updating logit weights over a predefined discrete set. The Follow the Perturbed Leader (FTPL) algorithm \citep{agarwal2019learning} achieves sublinear regret complexity $O(T^{1/2})$ for the expected regret $\mathbb{E}[\mathfrak{R}_T]$ when the perturbation $\{\sigma_t\}_{t \geq 0}$ is sampled from an exponential distribution with parameter $\eta$. Theorem 4 demonstrates that a proper discretization of $\mathcal{A}$ into a finite set $\mathcal{A}' \subseteq \mathcal{A}$, combined with grid search over $\mathcal{A}'$, can efficiently solve the offline oracle problem while preserving the $O(T^{1/2})$ expected regret complexity. This method is particularly suitable when gradient or subgradient information is difficult to obtain.
\begin{theorem}
For an arbitrarily small $\epsilon>0$,  the set $\mathcal{A}$ can be properly discretized as $\mathcal{A}^{\prime}$, such that 
the Hausdorff distance between the two sets, i.e., \[\max\left\{\sup_{\alpha\in\mathcal{A}}\inf_{\alpha^{\prime} \in \mathcal{A}^{\prime}} |\alpha - \alpha^{\prime}|,\,\inf_{\alpha\in\mathcal{A}}\sup_{\alpha^{\prime} \in \mathcal{A}^{\prime}} |\alpha - \alpha^{\prime}|\right\}\leq \epsilon.\] 
In addition, by choosing $\epsilon = O(T^{-1/2})$, {{Algorithm 1}} can achieve $O(T^{1/2})$ expected regret complexity.
\end{theorem}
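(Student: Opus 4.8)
The plan is to bound the expected regret by the two‑term decomposition
\[
\mathbb{E}[\mathfrak{R}_T] \;=\; \Big(\mathbb{E}\big[\textstyle\sum_{t=1}^{T} l_t(\alpha_t)\big] - \min_{\alpha'\in\mathcal{A}'}\textstyle\sum_{t=1}^{T} l_t(\alpha')\Big) \;+\; \Big(\min_{\alpha'\in\mathcal{A}'}\textstyle\sum_{t=1}^{T} l_t(\alpha') - \min_{\alpha\in\mathcal{A}}\textstyle\sum_{t=1}^{T} l_t(\alpha)\Big),
\]
namely the regret of FTPL run on the finite grid $\mathcal{A}'$, plus the error of replacing the continuous benchmark by its best grid point. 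I would first construct $\mathcal{A}'$: since $\mathcal{A}$ is compact (hence totally bounded) and contained in a bounded interval $[\alpha_{\min},\alpha_{\max}]$, it admits a finite $\epsilon$‑net $\mathcal{A}'\subseteq\mathcal{A}$ with $|\mathcal{A}'|=O(1/\epsilon)$ — e.g.\ from an evenly spaced grid of mesh $\le\epsilon/2$, keep one point of $\mathcal{A}$ in each occupied cell. Then $\sup_{\alpha\in\mathcal{A}}\inf_{\alpha'\in\mathcal{A}'}|\alpha-\alpha'|\le\epsilon$ by construction, while the reverse deviation vanishes because $\mathcal{A}'\subseteq\mathcal{A}$; this yields the stated Hausdorff‑distance bound.

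For the second bracketed term I would use that each $l_t$ is Lipschitz in $\alpha$. Since $\rho_\alpha$ is Lipschitz in $\alpha$ (as assumed, and as holds for the distortion risk measures here because $g_\alpha$ is Lipschitz in $\alpha$ and the $X_t$ are supported on a fixed bounded set throughout training), the triangle inequality and the $1$‑Lipschitzness of $|\cdot|$ make $l_t(\alpha)=|\rho_\alpha(X_t)-\rho_\alpha(X_{t+1})|$ $L$‑Lipschitz with $L$ independent of $t$ and $T$. Letting $\alpha^\star$ attain the continuous minimum and picking $\tilde\alpha\in\mathcal{A}'$ with $|\tilde\alpha-\alpha^\star|\le\epsilon$,
\[
\min_{\alpha'\in\mathcal{A}'}\textstyle\sum_{t=1}^{T} l_t(\alpha') \;\le\; \textstyle\sum_{t=1}^{T} l_t(\tilde\alpha) \;\le\; \min_{\alpha\in\mathcal{A}}\textstyle\sum_{t=1}^{T} l_t(\alpha) + L\,T\,\epsilon,
\]
so the discretization error is at most $LT\epsilon$.

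For the first bracketed term I would invoke the FTPL guarantee already cited \cite{agarwal2019learning}: with perturbations $\sigma_t\sim\exp(\eta)$ and $\eta$ tuned in the usual way, FTPL attains $O(\sqrt{T})$ expected regret against the best fixed point of its decision set, with the hidden constant depending on the diameter of that set, the Lipschitz/boundedness constants of the losses, and the dimension (here $1$) — but \emph{not} on $|\mathcal{A}'|$. Combining the two bounds gives $\mathbb{E}[\mathfrak{R}_T] \le O(\sqrt{T}) + LT\epsilon$; choosing $\epsilon$ of order $T^{-1/2}$ makes the second term $O(\sqrt{T})$ as well, so $\mathbb{E}[\mathfrak{R}_T]=O(\sqrt{T})$. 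This choice also keeps $|\mathcal{A}'|=O(\sqrt{T})$, so Line~4 of Algorithm~2 (a grid search, with running cumulative losses $\sum_{i\le t-1}l_i(\alpha)$ maintained per grid point) costs $O(\sqrt{T})$ per step, i.e.\ $O(T^{3/2})$ in total — the ``efficient'' part of the claim.

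I expect the main obstacle to be exactly this last point: verifying that the FTPL regret over $\mathcal{A}'$ does not degrade as the grid is refined to $|\mathcal{A}'|=\Theta(\sqrt{T})$. A generic experts‑type bound of order $\sqrt{T\log|\mathcal{A}'|}$ would only give $\sqrt{T\log T}$, and an order‑$\sqrt{T|\mathcal{A}'|}$ bound would be fatal. What rescues the argument is that Algorithm~2 perturbs the \emph{continuous} scalar $\alpha$ directly (the term $-\sigma_t\alpha$), so its regret is governed by the geometry of $\mathcal{A}$ — diameter $O(1)$, one dimension — rather than by the cardinality of the grid; one must check that the cited FTPL analysis indeed delivers a bound of this geometric, cardinality‑free form. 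A secondary thing to pin down is the uniform boundedness of the $X_t$ (hence a uniform $L$), i.e.\ that the ensemble $Q$‑values stay in a fixed range over all $t\le T$, which is needed both for the FTPL constant and for the $LT\epsilon$ estimate.
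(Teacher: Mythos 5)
Your proposal is correct and follows essentially the same route as the paper: the same decomposition into a discretization error bounded by $(T+1)G\epsilon$ via Lipschitz continuity of $l_t$, plus the FTPL regret on the finite grid $\mathcal{A}'$, which the paper bounds by $1/\eta + 2\eta G^2 T$ using the be-the-leader lemma and the stability-under-exponential-perturbation argument of \citet{agarwal2019learning}. The ``main obstacle'' you flag — that the FTPL bound must depend on the diameter and Lipschitz constant rather than on $|\mathcal{A}'|$ — is exactly what the paper's Proposition~9 verifies, and your reading of why it holds (the perturbation $-\sigma\alpha$ acts on the continuous scalar, so stability is geometric) matches the paper's argument.
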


\paragraph{Extension: Relation to Satisficing Measure} Now we define a \emph{recursive} loss function $l^{\prime}_t: \mathcal{A} \times \mathcal{A} \rightarrow \mathbb{R}$ for all $t = 1, \ldots, T$, where the first argument represents the decision variable $\alpha$, and the second argument is the parameter corresponding to the solution from the previous period. Specifically, at period $t-1$, we define \[l^{\prime}_{t-1}(\alpha,\,\alpha_{t-1}) := |\rho_{\alpha}(X_{t-1}) - \rho_{\alpha_{t-1}}(X_{t})|.\]
It can be shown that the cumulative regret under the above recursive loss function is equivalent to the cumulative regret under the regular loss function; that is, \[\mathfrak{R}^{\prime}_{T}:=\sum_{t=1}^{T} l^{\prime}_{t}(\alpha_{t},\,\alpha_{t}) -  \min_{\alpha\in \mathcal{A}} \sum_{t=1}^{T} l^{\prime}_{t}(\alpha,\,\alpha) = \mathfrak{R}_{T}.\]

We then modify and implement {{Algorithm 1  with the offline oracle model (Line 11)}} as 
\[
\alpha_{t} \in \arg \min_{\alpha \in\mathcal{A}^\prime} l^{\prime}_{t-1}(\alpha,\,\alpha_{t-1}),
\]
where the random perturbation term is omitted. Using this recursive loss function requires significantly less storage space compared to the regular loss, but the tradeoff is that the modified {{Algorithm 1}} incurs an $O(T)$ expected regret complexity for both $\mathfrak{R}_T$ and $\mathfrak{R}'_T$, which implies that, at least, a local optimum of the hindsight problem is attained.

The connection between this algorithm's offline oracle model and the {\it satisficing measure} \citep{brown2009satisficing} from the decision analysis literature, can be established. The satisficing measure evaluates the extent to which a random outcome fails to meet a fixed target. To formalize this connection. Let $\tau_t := \rho_{\alpha_{t-1}}(X_t)$ denote the ``target'' at period $t$. Given $\Omega$ as the scenario space, if $\min_{\omega \in \Omega} X_{t-1}(\omega) \geq \tau_t$, then $\alpha_t = \alpha_{\min}$ (which could be $0$, depending on the chosen risk measure). Conversely, if $\max_{\omega \in \Omega} X_{t-1}(\omega) \leq \tau_t$, then $\alpha_t = \alpha_{\max}$ (which could be $1$, depending on the chosen risk measure). Otherwise, the offline oracle problem of {{Algorithm 1}} becomes equivalent to solving the following quasi-concave optimization problem:
\begin{equation}
    \max \left\{ 1 - \alpha \, : \, \rho_{\alpha}(X_{t-1}) \leq \tau_t, \, \alpha \in \mathcal{A} \right\}, \label{satisficing}
\end{equation}
where the optimal solution provides the level of non-attainability of $X_{t-1}$ relative to $\tau_t$. 
Theorem \ref{thm:satisficing}, 
based on the analysis in Section 3.4 of \citet{brown2009satisficing} and Theorem 3.6 of \citet{huang2020data}, demonstrates that when $\rho_{\alpha}$ is chosen as CVaR, Problem \eqref{satisficing} can be reformulated as a stochastic convex program.

\begin{theorem}\label{thm:satisficing}
Suppose $\rho_{\alpha} = \text{CVaR}_{\alpha}$. We have \[\alpha_{t} = \min_{b\geq0}\mathbb{E}\left[\left(b(X_{t-1}-\tau_{t})+1\right)_{+}\right],\] with $\tau_{t} = \rho_{\alpha_{t-1}}(X_{t})$.
\end{theorem}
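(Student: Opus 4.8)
\emph{Proof sketch.} The plan is to convert Problem \eqref{satisficing} into a one-dimensional computation via the monotonicity of $\text{CVaR}_\alpha$ in $\alpha$, to dualize that computation with the Rockafellar--Uryasev variational formula for CVaR, and then to read off the stated convex program through a single change of variables.

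First I would record two standard facts about the right-tail $\text{CVaR}_\alpha(X_{t-1})$: (i) it is continuous and non-increasing in $\alpha$ on $(0,1]$, with $\text{CVaR}_1(X_{t-1})=\mathbb{E}[X_{t-1}]$; and (ii) the Rockafellar--Uryasev identity $\text{CVaR}_\alpha(X_{t-1})=\min_{c\in\mathbb{R}}\{c+\tfrac1\alpha\mathbb{E}[(X_{t-1}-c)_+]\}$. By (i), the feasible set $\{\alpha\in\mathcal{A}:\text{CVaR}_\alpha(X_{t-1})\le\tau_t\}$ is an interval with left endpoint $\alpha^\ast:=\inf\{\alpha:\text{CVaR}_\alpha(X_{t-1})\le\tau_t\}$, and since $1-\alpha$ is decreasing the optimal solution of \eqref{satisficing} is $\alpha_t=\alpha^\ast$ (projected onto $\mathcal{A}$ if it falls outside). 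I restrict attention to the non-degenerate case $\operatorname{ess\,inf}X_{t-1}<\tau_t<\operatorname{ess\,sup}X_{t-1}$; the complementary cases are the degenerate ones discussed just before the theorem, where the problem is either infeasible or solved at once by a boundary point of $\mathcal{A}$, and can be checked directly.

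Next I would substitute (ii) into the constraint: $\text{CVaR}_\alpha(X_{t-1})\le\tau_t$ holds iff there exists $c$ with $c+\tfrac1\alpha\mathbb{E}[(X_{t-1}-c)_+]\le\tau_t$. Since $\alpha\le 1$ and $\mathbb{E}[(X_{t-1}-c)_+]\ge 0$, any such $c$ satisfies $c\le\tau_t$, and $c=\tau_t$ is ruled out in the non-degenerate regime (it would force $X_{t-1}\le\tau_t$ a.s.); dividing by $\tau_t-c>0$ the constraint becomes $\mathbb{E}[(X_{t-1}-c)_+]/(\tau_t-c)\le\alpha$ for some $c<\tau_t$. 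Hence $\alpha_t=\alpha^\ast=\inf_{c<\tau_t}\mathbb{E}[(X_{t-1}-c)_+]/(\tau_t-c)$. The substitution $b:=1/(\tau_t-c)\in(0,\infty)$ is a bijection onto $(0,\infty)$ that rewrites the ratio as $b\,\mathbb{E}[(X_{t-1}-\tau_t+\tfrac1b)_+]=\mathbb{E}[(b(X_{t-1}-\tau_t)+1)_+]$, so $\alpha_t=\inf_{b>0}\mathbb{E}[(b(X_{t-1}-\tau_t)+1)_+]$. Since the objective equals $\mathbb{E}[1]=1\ge\alpha_t$ at $b=0$, adjoining $b=0$ leaves the infimum unchanged; and $b\mapsto\mathbb{E}[(b(X_{t-1}-\tau_t)+1)_+]$ is convex, finite, and grows to $+\infty$ (because $\mathbb{P}(X_{t-1}>\tau_t)>0$ in the non-degenerate case), so the infimum over $[0,\infty)$ is attained. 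This gives $\alpha_t=\min_{b\ge 0}\mathbb{E}[(b(X_{t-1}-\tau_t)+1)_+]$ and simultaneously presents \eqref{satisficing} as the asserted stochastic convex program.

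The continuity/monotonicity of $\text{CVaR}_\alpha$ and the algebraic manipulations are routine. The step I expect to require the most care is the passage from ``$\exists c$ attaining feasibility in the Rockafellar--Uryasev minimization'' to the scalar infimum over $c<\tau_t$: one has to verify that no feasible $\alpha$ is lost by restricting to $c<\tau_t$, that this restriction keeps the ratio well defined in the non-degenerate regime, and --- crucially --- that the resulting $b$-infimum is attained, since otherwise the argmax in \eqref{satisficing} would fail to exist and the claimed equality would weaken to an inequality. Each of these follows from convexity and the boundary behaviour of the objective, but together they are what makes the statement an exact identity rather than a bound.
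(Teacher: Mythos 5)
Your argument is correct and follows essentially the same route as the paper's proof: both pass through the Rockafellar--Uryasev variational representation of $\mathrm{CVaR}_\alpha$ and the change of variables $b = 1/(\tau_t - c)$ (the paper applies the representation to $X_{t-1}-\tau_t$ and writes $b=-1/y$, which is the same substitution). Your extra checks --- monotonicity and continuity of $\mathrm{CVaR}_\alpha$ in $\alpha$, exclusion of the boundary case $c=\tau_t$, the degenerate regimes, and attainment of the infimum over $b$ --- are details the paper's derivation glosses over but that are indeed needed for the claimed equality to be exact.
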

After applying sample average approximation for the expectation, Problem (\ref{satisficing}) can be further reformulated as a linear programming (LP) problem. The number of decision variables and constraints are both $O(K)$, and the best-known LP algorithm has a complexity of $O(K^{2.5})$ \citep{vaidya1989speeding}. However, we develop a specialized search algorithm, illustrated in Appendix Section B, which can determine $\alpha_t$ with significantly lower computational complexity.
\begin{theorem}
The search algorithm (Appendix Section B: Algorithm~\ref{Search Algorithm}) computes the optimal risk parameter in time $O(K \log K)$.
\end{theorem}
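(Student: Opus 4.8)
The plan is to reduce the computation of $\alpha_t$ to a one‑dimensional monotone search over the $K$ ensemble outputs, and to show that after a single sort the answer is read off by one linear scan, so that the sort is the only super‑linear operation.

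First I would invoke Theorem~\ref{thm:satisficing}: with $\rho_\alpha=\mathrm{CVaR}_\alpha$, computing $\alpha_t$ is exactly $\min_{b\ge 0}\tfrac1K\sum_{k=1}^K\bigl(b(Q^{\theta_k}(s,a)-\tau_t)+1\bigr)_+$, where $Q^{\theta_1}(s,a),\dots,Q^{\theta_K}(s,a)$ are the $K$ atoms of the empirical epistemic distribution $X_{t-1}(s,a)$ and $\tau_t=\rho_{\alpha_{t-1}}(X_t)$. Equivalently — and this is the form the Appendix~B algorithm manipulates directly — it is the task of locating the smallest $\alpha\in\mathcal A$ with $\mathrm{CVaR}_\alpha(X_{t-1})\le\tau_t$, since $\mathrm{CVaR}_\alpha$ is non‑increasing in $\alpha$ and the objective $1-\alpha$ of~\eqref{satisficing} is decreasing in $\alpha$. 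So the whole computation is a monotone root‑finding problem in a single scalar.

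Next I would expose the structure the algorithm exploits. Writing $x_k:=Q^{\theta_k}(s,a)$ and sorting $x_{(1)}\ge\cdots\ge x_{(K)}$ — the only super‑linear step, costing $O(K\log K)$ — two observations make a single pass sufficient: (i) the objective $f(b)$ is convex and piecewise linear, with kinks only at $b=1/(\tau_t-x_k)$ for atoms with $x_k<\tau_t$, and these kinks occur in the same order as the sorted atoms, so no second sort is needed; its slope at $b=0^+$ is $\tfrac1K\sum_k(x_k-\tau_t)$; (ii) on the grid $\alpha=j/K$, $\mathrm{CVaR}_{j/K}(X_{t-1})=\tfrac1j\sum_{i=1}^j x_{(i)}$ is the running mean of the $j$ largest atoms, non‑increasing in $j$, with an explicit quantile interpolation in between. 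A single left‑to‑right sweep over the sorted atoms, maintaining the running partial sum $\sum_{i\le j}x_{(i)}$ (equivalently the running slope of $f$), identifies the unique index where this monotone quantity crosses $\tau_t$ (equivalently, where the slope of $f$ changes sign), hence the minimizer $b^\star$ and the value $\alpha_t$; each atom costs $O(1)$, and the optimal value is recovered from the maintained sums in $O(1)$ (or, as a fallback, re‑evaluated in $O(K)$, still within budget). The boundary and degenerate cases — all atoms $\ge\tau_t$ (resp.\ $\le\tau_t$), giving the endpoints $\alpha_{\min}$ (resp.\ $\alpha_{\max}$) already isolated before~\eqref{satisficing}; atoms equal to $\tau_t$, which contribute constant terms and do not affect the slope; ties among atoms, simply grouped within the scan — do not change the asymptotics, so the total cost is one sort plus one linear pass, i.e.\ $O(K\log K)$.

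The main obstacle is not the complexity tally but justifying that the single monotone sweep is correct. Concretely I must prove that (a) $f$ is convex in $b$ with its minimizer attained either at one of the listed kinks, at $b=0$, or unbounded (the last being exactly a boundary case handled separately), and (b) that this kink coincides with the first $\alpha$‑grid crossing of $\mathrm{CVaR}_\alpha(X_{t-1})$ past $\tau_t$, so that the scan returns the maximizer of the quasi‑concave program~\eqref{satisficing} and, via Theorem~\ref{thm:satisficing}, precisely $\alpha_t$. Once this correspondence is pinned down, the running‑sum/running‑slope invariant follows by a routine induction; comparisons and arithmetic are $O(1)$ in the standard RAM model, and the sort is the unique logarithmic factor, yielding the claimed $O(K\log K)$ bound.
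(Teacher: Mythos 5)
Your argument matches the paper's own proof in essentially every respect: reduce via Theorem~\ref{thm:satisficing} to $\min_{b\ge 0}\tfrac1K\sum_k\bigl(b(Q^{\theta_k}_{t-1}-\tau_t)+1\bigr)_+$, observe it is convex and piecewise linear with kinks only at $b=1/(\tau_t-Q^{\theta_k}_{t-1})$ for atoms below $\tau_t$, handle the $b=0$ (slope nonnegative, $\alpha_t=1$) case separately, and otherwise sort the breakpoints and locate the slope sign change by a running-sum scan, so the sort's $O(K\log K)$ dominates. The equivalent ``running mean of the $j$ largest atoms crossing $\tau_t$'' view you mention is just a reparametrization of the same sweep, so this is the same proof.
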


\section{Experiments and Applications}

We conduct experiments on three classes of tasks to evaluate the performance of DRL-ORA. In all experiments, CVaR is used in ORA-based methods with $\alpha_{\min} = 0.1$, resulting in $\mathcal{A} = [0.1, 1]$. Additionally, we set $\beta(q) = q$, making the agent risk-neutral with respect to the implicit aleatory uncertainties of the tasks. We also implement the two existing risk aware adaptation approaches: ART \citep{liu2022adaptive} and TOP \citep{moskovitz2021tactical,wang2024improving,wang2025adaptive} as benchmarks.

\paragraph{Atari Games} We first demonstrate the performance of DRL-ORA on CartPole and the Atari games: Hero, MsPacman and SpaceInvaders. Here we analyze the results of CartPole task in addition to Example \ref{ex:CartPole}. Figure~\ref{fig:CartPole} in Appendix, built upon Figure~\ref{fig:CartPole_intro}, shows that DRL-ORA outperforms all other methods, with a noticeable reward advantage at the early stages of the episodes. 
{To statistically validate the observed performance differences, we employed the Mann-Whitney $U$ test \citep{10.1214/aoms/1177730491}. The resulting Rank-Biserial correlation effect sizes (0.990 against ART and 0.787 against TOP, $p$ < 0.001) confirm a substantial advantage of ORA over both the ART and TOP baselines in the CartPole environment.}
Furthermore, the 90\% confidence interval of DRL-ORA highlights its stability. 
{To evaluate the robustness of our approach, we compared the ORA using quantile ("ORA(quantile)") against the default CVaR-based version ("ORA(CVaR)"). The results show that despite a slight performance reduction, ORA(quantile) maintains considerable risk adaptation capability.  The subsequent experiments further demonstrate the robustness of ORA across different risk measures.}
Due to limited spaces, please refer to Appendix Section C for the additional experiment results on other Atari games. 

\paragraph{Nano Drone Navigation}
We evaluate DRL-ORA on an open-source Nano Drone navigation task \citep{liu2022adaptive}, where ART has shown superior performance compared to IQN. In this partially observable environment, the drone aims to maximize its score by reaching the target while avoiding obstacles. It gains points for success but loses points for time and collisions, and the agent's level of risk-awareness directly influences its exploration strategy and overall performance. We record the complete training and testing results for ORA with the regular loss function (``Recursive ORA'' in short), ORA with the recursive loss function, ART, TOP and IQN under various fixed risk parameters. All training parameters, including learning rate and random seeds, are consistent with the default settings in \citep{liu2022adaptive}, see Appendix Table~\ref{Supp:Hyper-parameters N}.

\begin{figure}[t]
\begin{center}
\includegraphics[width=0.45\textwidth]{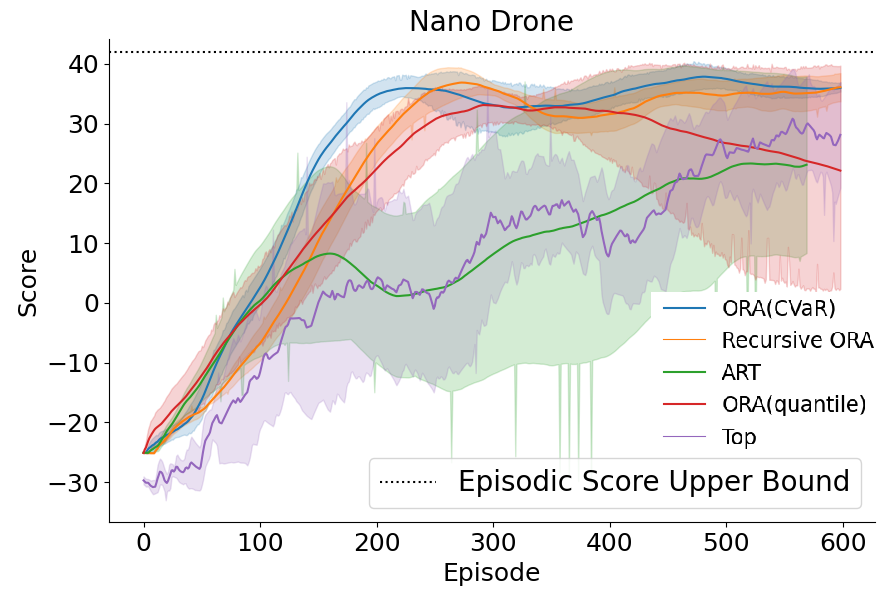}
    \end{center}
    \caption{Average episodic scores in Nano Drone navigation task. The shaded area represents a 90\% confidence interval.} 
    \label{fig:NanoDrone reward}
\end{figure}

\begin{table}[h]
  \centering
      \caption{Success and collision rate percentage}
      \label{tab:Testing}
      \smallskip
      \small{
\begin{tabular}{lcccccc}
\toprule
 & \multicolumn{2}{c}{Density 2} & \multicolumn{2}{c}{Density 6} & \multicolumn{2}{c}{Density 12} \\
\cmidrule(lr){2-3} \cmidrule(lr){4-5} \cmidrule(lr){6-7}
Metric (\%) & Suc. & Col. & Suc. & Col. & Suc. & Col. \\
\midrule
IQN $\alpha$=0.5 & 88 & 5 & 70 & 15 & 48 & 32 \\
IQN $\alpha$=1.0 & 88 & 4 & 74 & 10 & 52 & 25 \\
ART & 84 & 9 & 73 & 15 & 54 & 25 \\
Recursive ORA & {93} & 5 & 75 & 16 & 54 & 26 \\
TOP & 90 & 1 & 78 & 5 & 60 & 26 \\
\textbf{ORA(CVaR)} & 92 & 6 & \textbf{81} & 13 & \textbf{64} & 28 \\
\textbf{ORA(quantile)}&\textbf{94}&6&75&16&52&23\\
\bottomrule
\end{tabular}
}

\end{table}
As shown in Figure~\ref{fig:NanoDrone reward}, ORA not only shows the best training performance and fastest convergence 
{but also significantly outperforms both baselines ($p$ < 0.001), achieving Rank-Biserial effect sizes of 0.319 over ART and 0.309 over TOP.}
Table~\ref{tab:Testing} also presents the average testing results of each algorithm over 500 evaluation runs in environments with varying obstacle densities (i.e., the number of obstacles). Testing is particularly important in this navigation task, as excessive exploration during training may result in lower rewards but ultimately lead to an effective policy with strong testing performance. The testing results show that ORA outperforms all other algorithms, especially in environments with higher uncertainty (\emph{Density 6 \& 12}). In contrast, in the task with lower environmental uncertainty (\emph{Density 2}), Recursive ORA slightly outperforms ORA while offering more efficient computation and requiring less storage for historical epistemic uncertainties.

\paragraph{Knapsack}
We assess DRL-ORA by applying it to a classic operations research (OR) problem, the Knapsack problem, on OR-gym \citep{hubbs2020or, balaji2019orl}, a widely utilized RL library for OR problems. The Knapsack problem is a combinatorial optimization problem aiming at maximizing the total value of items placed in a knapsack without exceeding its maximum capacity. The agent's reward is the value of items placed in the knapsack, and an episode terminates once the capacity is full. There is no aleatory uncertainty in the Knapsack problem. We set \emph{50 items} in the Binary Knapsack \citep{hubbs2020or} for a lower computational load while keeping other settings aligned with the default. All training hyper-parameters are provided in Appendix Table~\ref{Supp:Hyper-parameters}. In terms of testing results in Figure~\ref{fig: Knapsack_testing}, ORA achieves a higher average reward than IQN, demonstrating that by precisely capturing epistemic uncertainty, our adaptive algorithm maintains its effectiveness in such scenarios.
{Our ORA method achieves perfect separation from TOP (effect size = 1.000) and substantial advantages over DQN (0.699) and ART (0.544), all with $p$ < 0.001.}
Note that the testing result should be normally higher than the final training reward because a fixed $\epsilon$ probability for random action selection is used in the whole training, while there is no random action selection in testing. 

\begin{figure}[t]
    \centering
\includegraphics[width=0.45\textwidth]{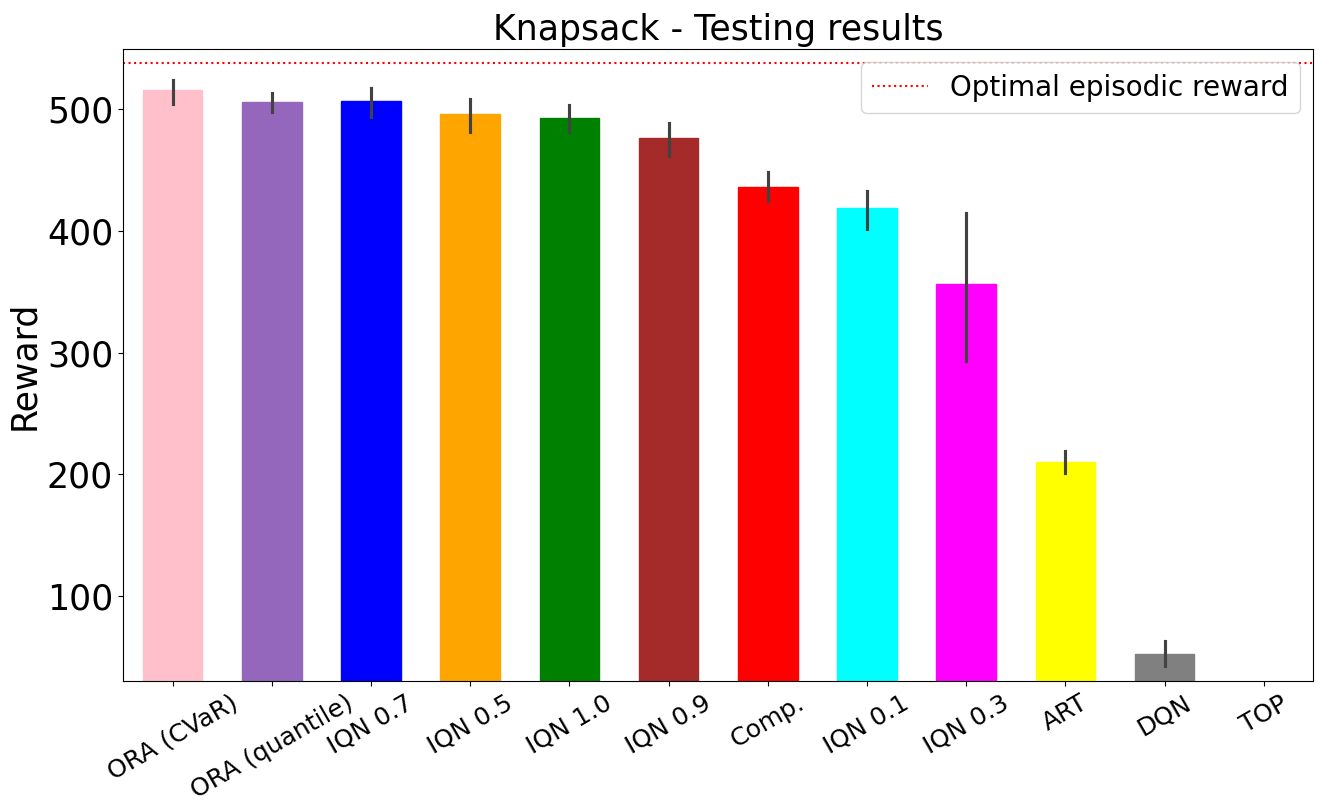}
    \caption{Testing results with 90\% confidence interval. "Comp." means Composite IQN. The ``Optimal episodic reward'' is the benchmark solved via DP.}
    \label{fig: Knapsack_testing} 
    \end{figure}
    \begin{figure}[t]
   \centering
\includegraphics[width=0.45\textwidth]{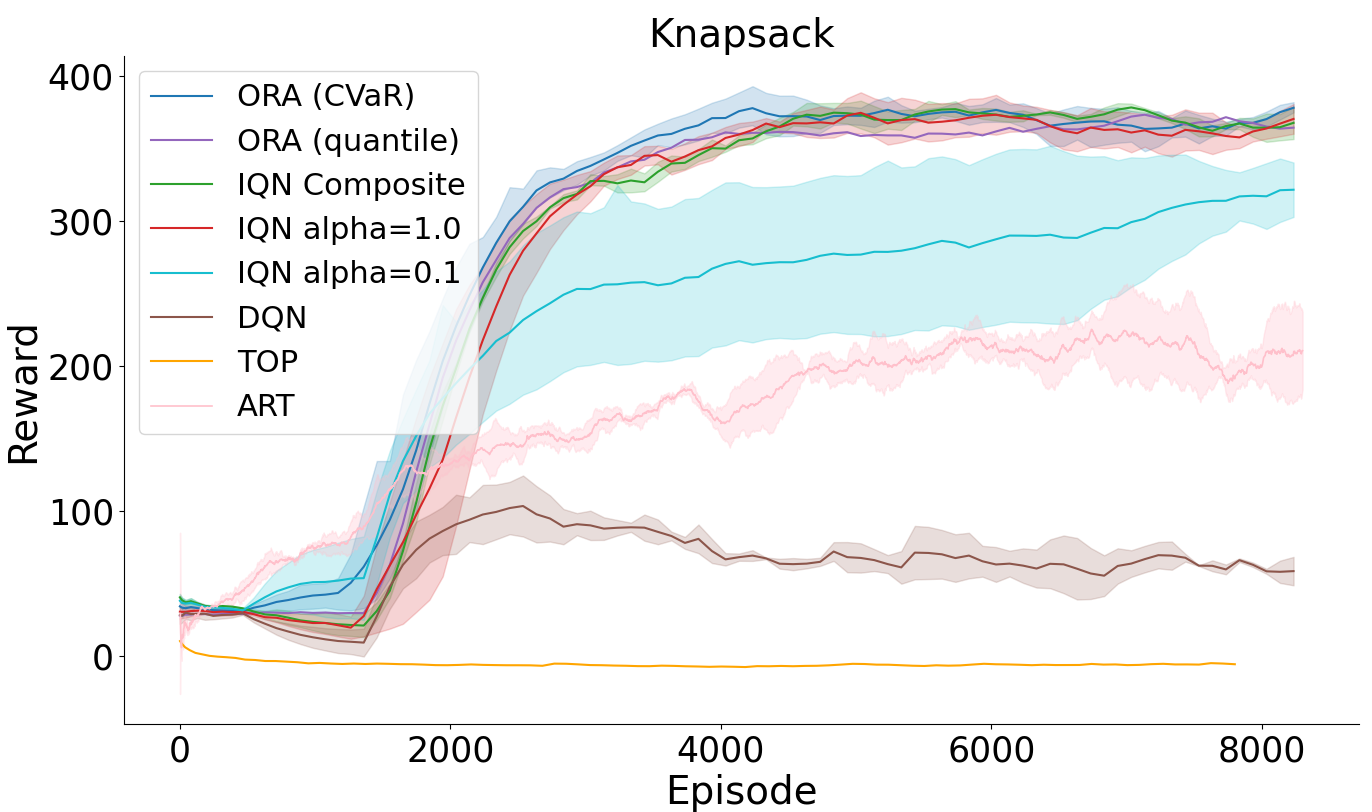}
    \caption{Reward lines on Knapsack. }
    \label{fig: Knapsack_reward}
\end{figure}

In Figure~\ref{fig: Knapsack_reward}, IQN with $\alpha = 1.0$ demonstrates a clear advantage over IQN with $\alpha = 0.1$, while regular Deep $Q$-Network (DQN) \citep{mnih2015human}. fails to perform effectively. This suggests that lower risk awareness is more appropriate in tasks without aleatory uncertainty. Once again, ORA outperforms the best IQN, TOP and ART, particularly during the early and intermediate stages of training. Additionally, we conduct an ablation experiment comparing ORA with the IQN Composite method \citep{hai2022rasr}. The IQN Composite algorithm is derived from Algorithm 1 by fixing $\alpha_t = 1$ for all $t = 1, \ldots, T$ and for all state-action pairs, making it risk-neutral with respect to epistemic uncertainty. This method replaces the expectation imposed on the return distribution with an expectation imposed solely on epistemic uncertainties, as there is no aleatory uncertainty in this task.
Figure~\ref{fig: Knapsack_reward} shows that IQN Composite fails to achieve similar performance to ORA in the early training episodes, indicating that ORA's superior performance stems from its adaptive mechanism for risk level selection.

\paragraph{Further Remarks} 
Table~\ref{Relation between alpha and epistemic uncertainty} in Appendix, reports, for ORA, the monitored left-truncated variance (LTV) with respect to the median of the epistemic uncertainty distribution obtained in each training period of the three experiments. The table also shows the corresponding risk parameters used in each case.
CartPole has the smallest LTV value in average. Its LTV fluctuates over the whole training process, and the corresponding risk parameter remains at a high level. We also observe that LTV is almost uni-modal over episodes in Nano Drone navigation and Knapsack. A high risk parameter (lower risk-awareness level), is favored at the beginning to acquire knowledge of the environment, whereas it subsequently declines in performance when large LTVs are detected. Knapsack has the largest LTV value which decreases slower than that in Nano Drone navigation after reaching the peak. This can explain why the superior performance of ORA lasts longer in the Knapsack problem than that in Nano Drone navigation (see Figure~\ref{fig:NanoDrone reward} and~\ref{fig: Knapsack_reward}). 


\section{Conclusion}

In this work, we propose an RL framework with an adaptive mechanism for learning risk levels to hedge against the epistemic uncertainties. The method only needs an extra multi-head ensemble structure so is eligible of employing different loss functions and distortion risk measures and being applied to IQN-type algorithms. Various applications validate the superiority of the proposed method. For future research, we plan to develop more accurate and efficient methods for epistemic uncertainty quantification, e.g., improving the scalability when a large number of ensemble networks are applied. We will also extend our approach to RL tasks under non-stationary environments. {{Our approach is expected to remain valid under non-stationary environment as it is adaptive according to the latest experience/episode (epistemic uncertainty estimation).}}





\bibliography{AAMAS/ICML}

\newpage

\onecolumn

\title{Appendix\\DRL-ORA: Distributional Reinforcement Learning with Online Epistemic Risk Adaptation}
\maketitle

\appendix
\section{Preliminaries on the IQN Loss Function}

For two samples $q,\,q^{\prime}$ and the reward $r_t$ at time step $t$, the TD error is $\delta_t^{q,\,q^{\prime}}=r_t+\gamma Z_{q^{\prime}}(s_{t+1}, \pi_\beta(s_{t+1}))-Z_q(s_t,\,a_t)$. The IQN loss function is given by:
\[
\mathcal{L}\left(s_t, a_t, r_t, s_{t+1}\right)=\frac{1}{N^{\prime}} \sum_{i=1}^N \sum_{j=1}^{N^{\prime}} h_{q}^\kappa\left(\delta_t^{q_i,\, q_j^{\prime}}\right),
\]
where $N$ and $N^\prime$ denote the respective number of i.i.d. samples $q_{i},\,q_{j^{\prime}} \sim U([0,\,1])$ used to estimate the loss, and $h_{\tau}^\kappa$ is the quantile estimate generated by the Huber loss function (see \citet{huber1992robust}) with a hyper-parameter threshold $\kappa$ to smooth the non-differentiable point at zero; that is, 
\begin{align*}
h_{\tau}^\kappa\left(\delta_t^{q_i,\,q_j^{\prime}}\right)&=\left|q-\mathbb{I}\left\{\delta_t^{q_i,\, q_j^{\prime}}<0\right\}\right| \frac{L_\kappa\left(\delta_t^{q_i,\, q_j^{\prime}}\right)}{\kappa};\\
L_\kappa\left(\delta_t^{q_i,\, q_j^{\prime}}\right)&=\begin{cases}\frac{1}{2} (\delta_t^{q_i,\,q_j^{\prime}})^2, & \text { if }\left|\delta_t^{q_i,\,q_j^{\prime}}\right| \leq \kappa, \\ \kappa\left(\left|\delta_t^{q_i,\, q_j^{\prime}}\right|-\frac{1}{2} \kappa\right), & \text { otherwise }.\end{cases}
\end{align*}
where $\delta_t$ is the TD-error, $\mathbb{I}\{\cdot\}$ is the indicator function, and normally $\kappa = 1$ is chosen (see \citet{dabney2018implicit}).
A corresponding sample-based risk-aware policy is then obtained by approximating $Q_{\beta}$ 
through the average of distorted return distribution samples. Based on the above settings, we are able to estimate the return distribution through a neural network structure, update the network parameters by by minimizing the IQN loss, and implement a risk-aware policy.

\section{Proofs for Technical Results}



\textsc{Proof of Theorem 4}: Based on Lemma 4.1 of \citet{cesa2006prediction} and \citet{agarwal2019learning}, we can instead analyze a slightly different algorithm which draws only a single noise
vector $\sigma \sim \exp(\eta)$, rather than drawing a fresh noise vector on every round. Proving regret bounds for this variant translates into asymptotically equivalent (expected) regret bounds for Algorithm 1. Set $l_0 = -\sigma \cdot \alpha$. We have the following several lemmas and proposition  hold which are used to prove Theorem 4.
\begin{lemma}
 The loss function $l_{t}$ is Lipschitz continuous in $\alpha$.
\end{lemma}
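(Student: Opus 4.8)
\emph{Proof plan.} The plan is to reduce the Lipschitz continuity of $l_t$ to the standing assumption that the risk measure $\rho_\alpha$ is Lipschitz in its parameter, via two elementary estimates. Writing $l_t(\alpha) = |f_t(\alpha)|$ with $f_t(\alpha) := \rho_\alpha(X_t) - \rho_\alpha(X_{t+1})$, the reverse triangle inequality $\bigl|\,|a|-|b|\,\bigr| \le |a-b|$ gives, for any $\alpha_1,\alpha_2 \in \mathcal{A}$,
\[
|l_t(\alpha_1) - l_t(\alpha_2)| \le |f_t(\alpha_1) - f_t(\alpha_2)| \le |\rho_{\alpha_1}(X_t) - \rho_{\alpha_2}(X_t)| + |\rho_{\alpha_1}(X_{t+1}) - \rho_{\alpha_2}(X_{t+1})|.
\]
Hence it suffices to exhibit a constant $L$, independent of $t$, with $|\rho_{\alpha_1}(X) - \rho_{\alpha_2}(X)| \le L\,|\alpha_1-\alpha_2|$ for every $X$ in the family $\{X_t\}_{t\ge 1}$ of epistemic-uncertainty distributions; then $l_t$ is $2L$-Lipschitz on $\mathcal{A}$ for all $t$. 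The perturbed loss $l_0(\alpha) = -\sigma\alpha$ is linear, hence Lipschitz with constant $|\sigma|$, which is almost surely finite.

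For the distortion risk measures used in the paper I would produce such an $L$ explicitly. Each $X_t$ is, by construction, supported on the finite set $\{Q^{\theta_k}(s,a)\}_{k=1}^K$, and since $\mathcal{L}$ is the space of bounded random variables (bounded rewards together with $\gamma\in(0,1)$ yield a uniform bound), there is $B<\infty$ with $|X_t|\le B$ a.s. for all $t$. Inserting the integral representation of $\rho_\alpha$ into the difference, using that $S_X(x)=1$ for $x<-B$ and $S_X(x)=0$ for $x\ge B$ (so the boundary values $g_\alpha(1),g_\alpha(0)$ are $\alpha$-independent and the tail contributions cancel in the difference), and invoking the standing assumption that $g_\alpha$ is $L_g$-Lipschitz in $\alpha$ uniformly over $[0,1]$, one obtains
\[
|\rho_{\alpha_1}(X) - \rho_{\alpha_2}(X)| \le \int_{-B}^{B} \bigl| g_{\alpha_1}(S_X(x)) - g_{\alpha_2}(S_X(x)) \bigr|\, dx \le 2BL_g\,|\alpha_1-\alpha_2|,
\]
so $L = 2BL_g$ works and $l_t$ is $4BL_g$-Lipschitz.

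The main obstacle is the uniformity of the Lipschitz constant over the random arguments $X_t$: this rests on a uniform almost-sure bound $B$ on the ensemble $Q$-values, which I would justify from bounded rewards and $\gamma\in(0,1)$ (or simply from the requirement $X_t\in\mathcal{L}$). A secondary subtlety is that the pointwise bound $\sup_{u\in[0,1]}|g_{\alpha_1}(u)-g_{\alpha_2}(u)| \le L_g|\alpha_1-\alpha_2|$ must genuinely hold: it does for the CVaR distortion $g_\alpha(x)=\max\{1-x/\alpha,0\}$ on $\mathcal{A}=[\alpha_{\min},1]$ with $\alpha_{\min}>0$ (the $1/\alpha$ factor is Lipschitz when bounded away from $0$), which is the measure used throughout the experiments; for risk measures where only $\rho_\alpha$ rather than $g_\alpha$ enjoys this property, I would bypass the integral computation and apply the standing Lipschitz assumption on $\rho_\alpha$ directly. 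The remaining steps are routine.
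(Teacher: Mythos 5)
Your proof is correct and follows essentially the same route as the paper's (one-line) argument: reduce via the reverse triangle inequality and the fact that a difference of Lipschitz functions is Lipschitz to the Lipschitz continuity of $\alpha\mapsto\rho_\alpha(X)$. You go further than the paper by actually verifying, via the distortion integral representation and the uniform bound on the ensemble $Q$-values, that this Lipschitz constant is uniform over the random arguments $X_t$ — a point the paper disposes of by standing assumption — so no gap remains.
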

\begin{proof}
    Use the fact that the sum of Lipschitz functions is also Lipschitz, and the reverse triangle inequality for $L_{2}$-norm.
\end{proof}
Denote the Lipschitz continuity modulus as $G>0$ for $\{l_{t}\}_{t=0}^{T}$.
\begin{lemma} \label{A1}
    By the discretization set $\mathcal{A}^\prime \subseteq \mathcal{A}$, we have from Lipschitz continuity of loss functions that,
\begin{equation*}
    \min_{\alpha^{\prime} \in \mathcal{A}^{\prime}}\sum_{t=0}^{T} l_{t}(\alpha^{\prime}) - \min_{\alpha \in \mathcal{A}} \sum_{t=0}^{T} l_{t}(\alpha) \leq (T+1)G\epsilon. \label{Dis}
\end{equation*}
\end{lemma}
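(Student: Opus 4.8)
The plan is to prove Lemma~\ref{A1} as a direct consequence of the $G$-Lipschitz continuity of each $l_t$ (Lemma~5) together with the $\epsilon$-covering property of $\mathcal{A}'$ established in Theorem~4. First I would let $\alpha^\star \in \arg\min_{\alpha \in \mathcal{A}} \sum_{t=0}^{T} l_t(\alpha)$ denote an optimal point of the continuous offline problem; such a minimizer exists because $\mathcal{A}$ is compact and each $l_t$ is continuous, hence so is the finite sum. By the first term in the Hausdorff-distance bound of Theorem~4, namely $\sup_{\alpha \in \mathcal{A}} \inf_{\alpha' \in \mathcal{A}'} |\alpha - \alpha'| \leq \epsilon$, there is a point $\hat\alpha \in \mathcal{A}'$ with $|\alpha^\star - \hat\alpha| \leq \epsilon$ (using compactness of $\mathcal{A}'$ to attain the infimum, or simply picking any point within $\epsilon + \delta$ and letting $\delta \to 0$).

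Next I would bound the difference termwise. For each $t \in \{0, 1, \ldots, T\}$, Lipschitz continuity gives $l_t(\hat\alpha) - l_t(\alpha^\star) \leq G |\hat\alpha - \alpha^\star| \leq G\epsilon$. Summing over the $T+1$ indices $t = 0, \ldots, T$ yields $\sum_{t=0}^{T} l_t(\hat\alpha) - \sum_{t=0}^{T} l_t(\alpha^\star) \leq (T+1) G \epsilon$. Finally, since $\hat\alpha \in \mathcal{A}'$, we have $\min_{\alpha' \in \mathcal{A}'} \sum_{t=0}^{T} l_t(\alpha') \leq \sum_{t=0}^{T} l_t(\hat\alpha)$, and by definition $\sum_{t=0}^{T} l_t(\alpha^\star) = \min_{\alpha \in \mathcal{A}} \sum_{t=0}^{T} l_t(\alpha)$. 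Chaining these inequalities gives exactly
\[
\min_{\alpha' \in \mathcal{A}'} \sum_{t=0}^{T} l_t(\alpha') - \min_{\alpha \in \mathcal{A}} \sum_{t=0}^{T} l_t(\alpha) \leq (T+1) G \epsilon,
\]
which is the claim. (The reverse inequality $\min_{\alpha' \in \mathcal{A}'} \sum l_t(\alpha') \geq \min_{\alpha \in \mathcal{A}} \sum l_t(\alpha)$ holds trivially since $\mathcal{A}' \subseteq \mathcal{A}$, so the difference is in fact nonnegative, but the statement only asserts the upper bound.)

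This argument is essentially routine, so there is no serious obstacle; the only point needing a little care is the inclusion of the index $t=0$. Recall that the proof of Theorem~4 sets $l_0 = -\sigma \cdot \alpha$, which is linear in $\alpha$ and therefore $|\sigma|$-Lipschitz, so it is covered by Lemma~5 once we take $G$ to be the common Lipschitz modulus of the whole family $\{l_t\}_{t=0}^{T}$, as stated in the sentence preceding the lemma. One should also note that $G$ may depend on the realized perturbation $\sigma$; since the eventual regret statement in Theorem~4 concerns the expected regret, it suffices that $\sigma$ has finite expectation (it is exponentially distributed), so taking expectations later poses no difficulty. With these remarks in place the proof is complete.
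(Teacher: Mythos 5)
Your proof is correct, and it is worth noting that your execution is actually tighter than the paper's own. Both arguments rest on the same two ingredients (the common Lipschitz modulus $G$ and the $\epsilon$-covering of $\mathcal{A}$ by $\mathcal{A}'$), but the paper compares the two \emph{actual minimizers} $\alpha^1 \in \arg\min_{\mathcal{A}}\sum_t l_t$ and $\alpha^2 \in \arg\min_{\mathcal{A}'}\sum_t l_t$ and asserts $|\alpha^1-\alpha^2|\le\epsilon$ ``given that the Hausdorff distance is bounded by $\epsilon$.'' That step does not follow: the covering property only guarantees that \emph{some} point of $\mathcal{A}'$ lies within $\epsilon$ of $\alpha^1$, not that the discrete minimizer $\alpha^2$ is that point (e.g., on $\mathcal{A}=[0,1]$ with $\mathcal{A}'=\{0,0.5,1\}$ and $\epsilon=0.5$, the two minimizers can be distance $1$ apart). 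Your version repairs this by introducing the nearest grid point $\hat\alpha$ to $\alpha^\star$, bounding $\sum_t l_t(\hat\alpha)-\sum_t l_t(\alpha^\star)\le (T+1)G\epsilon$ termwise, and then using $\min_{\alpha'\in\mathcal{A}'}\sum_t l_t(\alpha')\le\sum_t l_t(\hat\alpha)$ — which is the standard and correct way to close the argument, and yields exactly the stated bound. Your side remarks are also apt: the $t=0$ term $l_0=-\sigma\alpha$ is $|\sigma|$-Lipschitz, so $G$ as the common modulus of $\{l_t\}_{t=0}^T$ implicitly depends on the realized perturbation, and since $\sigma\sim\exp(\eta)$ has finite mean this causes no trouble when expectations are taken in Theorem~4 — a point the paper glosses over.
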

\begin{proof}
    Denote $\alpha^1 \in \arg\min_{\alpha \in \mathcal{A}} \sum_{t=0}^{T} l_{t}(\alpha)$ and $\alpha^2 \in \arg \min_{\alpha^{\prime} \in \mathcal{A}^{\prime}}\sum_{t=0}^{T} l_{t}(\alpha^{\prime})$. We have by Lipschitz continuity of loss function that 
    \[
     \left| \sum_{t=0}^{T} l_{t}(\alpha^1) -\sum_{t=0}^{T} l_{t}(\alpha^2)\right|\leq (T+1)G |\alpha^1 - \alpha^2|.
    \]
    And we further have 
     \[\left| \sum_{t=0}^{T} l_{t}(\alpha^1) -\sum_{t=0}^{T} l_{t}(\alpha^2)\right|\leq (T+1)G |\alpha^1 - \alpha^2| \leq  (T+1)G\epsilon,
     \]
     given that the Hausdorff distance is bounded by $\epsilon$. The proof can be completed given that 
     \[
     \min_{\alpha^{\prime} \in \mathcal{A}^{\prime}}\sum_{t=0}^{T} l_{t}(\alpha^{\prime}) - \min_{\alpha \in \mathcal{A}} \sum_{t=0}^{T} l_{t}(\alpha) \geq 0.
     \]
\end{proof}

\begin{lemma}\label{A2}
The regret is upper bounded via, 
\[
\sum_{t=0}^{T} l_{t}(\alpha_{t}^{\prime}) - \min_{\alpha^{\prime}\in\mathcal{A}^{\prime}}\sum_{t=0}^{T} l_{t}(\alpha^{\prime}) \leq \sum_{t=0}^{T} l_{t}(\alpha_{t}^{\prime}) - \sum_{t=0}^{T} l_{t}(\alpha_{t+1}^{\prime}).
\]
\end{lemma}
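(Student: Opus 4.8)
The plan is to prove this by the classical \emph{Follow-the-Leader, Be-the-Leader} (FTL--BTL) argument. After the single-perturbation reduction invoked just before Lemma 1 (and with $l_0 := -\sigma\cdot\alpha$), the iterate of Algorithm 2 is exactly the leader over the already-observed losses, namely $\alpha_t^{\prime}\in\arg\min_{\alpha^{\prime}\in\mathcal{A}^{\prime}}\sum_{i=0}^{t-1}l_i(\alpha^{\prime})$; since $\mathcal{A}^{\prime}$ is finite this minimizer exists. Cancelling the common term $\sum_{t=0}^{T}l_t(\alpha_t^{\prime})$ from both sides of the claimed inequality shows that it is equivalent to the ``be-the-leader'' bound
\[
\sum_{t=0}^{T}l_t(\alpha_{t+1}^{\prime})\;\le\;\min_{\alpha^{\prime}\in\mathcal{A}^{\prime}}\sum_{t=0}^{T}l_t(\alpha^{\prime}),
\]
which states that playing the \emph{next} period's leader at every step is never worse than the single best point in hindsight. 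This is the only thing I would actually need to prove.

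I would establish the displayed bound by induction on $T$. The base case $T=0$ is immediate: by definition $\alpha_1^{\prime}$ minimizes $l_0$ over $\mathcal{A}^{\prime}$, so $l_0(\alpha_1^{\prime})=\min_{\alpha^{\prime}}l_0(\alpha^{\prime})$. For the inductive step, assume $\sum_{t=0}^{T-1}l_t(\alpha_{t+1}^{\prime})\le\min_{\alpha^{\prime}}\sum_{t=0}^{T-1}l_t(\alpha^{\prime})$. Evaluating that minimum at the specific point $\alpha_{T+1}^{\prime}\in\mathcal{A}^{\prime}$ gives $\sum_{t=0}^{T-1}l_t(\alpha_{t+1}^{\prime})\le\sum_{t=0}^{T-1}l_t(\alpha_{T+1}^{\prime})$; adding $l_T(\alpha_{T+1}^{\prime})$ to both sides yields $\sum_{t=0}^{T}l_t(\alpha_{t+1}^{\prime})\le\sum_{t=0}^{T}l_t(\alpha_{T+1}^{\prime})$. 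Finally, $\alpha_{T+1}^{\prime}$ is by definition a minimizer of $\sum_{t=0}^{T}l_t(\cdot)$ over $\mathcal{A}^{\prime}$, so the right-hand side equals $\min_{\alpha^{\prime}}\sum_{t=0}^{T}l_t(\alpha^{\prime})$, closing the induction. Reversing the cancellation performed at the outset then recovers the statement of the lemma verbatim.

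There is essentially no analytic difficulty here; the proof is purely combinatorial. The one point that needs care — and the only place I could see a slip — is the bookkeeping of indices: one must be sure that under the single-noise reduction the primed iterate $\alpha_t^{\prime}$ really is the minimizer of the cumulative loss \emph{including the perturbation term $l_0$} over rounds $0,\dots,t-1$, rather than over $1,\dots,t-1$, since otherwise the base case and the telescoping both shift. Everything else (Lipschitz continuity of $l_t$, the discretization error $G\epsilon$ from Lemmas 1--2) is not needed for this lemma but will be combined with it afterwards, when $\sum_{t=0}^{T}\bigl(l_t(\alpha_t^{\prime})-l_t(\alpha_{t+1}^{\prime})\bigr)$ is bounded through the stability $|\alpha_t^{\prime}-\alpha_{t+1}^{\prime}|$ of the perturbed leader to obtain the $O(T^{1/2})$ regret.
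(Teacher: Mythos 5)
Your proposal is correct and follows essentially the same route as the paper's proof: the same cancellation reducing the claim to the ``be-the-leader'' inequality $\sum_{t=0}^{T}l_t(\alpha_{t+1}^{\prime})\le\min_{\alpha^{\prime}\in\mathcal{A}^{\prime}}\sum_{t=0}^{T}l_t(\alpha^{\prime})$, and the same induction on $T$ with the base case at $t=0$ (where $l_0=-\sigma\alpha$) and the specialization to $\alpha^{\prime}=\alpha_{T+1}^{\prime}$ in the inductive step. Your added remark about index bookkeeping under the single-noise reduction is a fair point of care but does not change the argument.
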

\begin{proof}
    The proof is inspired by the proof of Lemma 2.1 in \citet{shalev2012online}. Subtracting $\sum_t l_{t}(\alpha_t^{\prime})$ from both sides of the inequality and rearranging, the desired inequality can be rewritten as
    \[
    \sum_{t=0}^{T} l_{t}(\alpha_{t+1}^{\prime}) \leq \min_{\alpha^\prime \in\mathcal{A}^\prime }\sum_{t=0}^{T} l_{t}(\alpha^\prime).
    \]
    We prove this inequality by induction. The base case of $T = 0$ follows directly from the definition of $\alpha_{t+1}^{\prime}$. Assume the inequality holds for $T-1$, then for all $\alpha^\prime \in \mathcal{A}^\prime$, we have 
    \[
    \sum_{t=0}^{T-1} l_{t}(\alpha_{t+1}^{\prime}) \leq \sum_{t=0}^{T-1} l_{t}(\alpha^\prime).
    \]
    Adding $l_T(\alpha_{T+1}^{\prime})$ to both sides we get
    \[
      \sum_{t=0}^{T} l_{t}(\alpha_{t+1}^{\prime}) \leq l_T(\alpha_{T+1}^{\prime}) + \sum_{t=0}^{T-1} l_{t}(\alpha^\prime).
    \]
    The above holds for all $\alpha^\prime$ and in particular for $\alpha^\prime  = \alpha^\prime_{T+1}$. Thus,
    \[
      \sum_{t=0}^{T} l_{t}(\alpha_{t+1}^{\prime}) \leq  \sum_{t=0}^{T} l_{t}(\alpha_{T+1}^\prime) = \min_{\alpha^\prime \in\mathcal{A}^\prime }\sum_{t=0}^{T} l_{t}(\alpha^\prime),
    \]
    where the last equation follows from the definition of $\alpha_{T+1}^\prime$. This concludes our inductive argument.
\end{proof}
By combining Lemma \ref{A1} and \ref{A2}, we have 
\[
\sum_{t=0}^{T} l_{t}(\alpha_{t+1}^{\prime}) \leq \min_{\alpha\in\mathcal{A}}\sum_{t=0}^{T} l_{t}(\alpha) + (T+1)G\epsilon.
\]
We further have 
\begin{equation}
\sum_{t=0}^{T} l_{t}(\alpha^{\prime}_{t}) -  \min_{\alpha \in \mathcal{A}} \sum_{t=0}^{T} l_{t}(\alpha) \leq \sum_{t=0}^{T} l_{t}(\alpha^{\prime}_{t}) - \sum_{t=0}^{T} l_{t}(\alpha^{\prime}_{t+1}) + (T+1)G\epsilon.\label{Base}
\end{equation}

\begin{lemma}
The expected regret of Algorithm 1, from inequality (\ref{Base}) has
\begin{align}
& \mathbb{E}\left[\sum_{t=1}^{T} l_{t}(\alpha^{\prime}_{t}) -  \min_{\alpha \in \mathcal{A}} \sum_{t=1}^{T} l_{t}(\alpha)\right]  \nonumber
\\
\leq & \mathbb{E}\left[-\sigma \alpha ^{\ast} + \sigma \alpha^{\prime}_{1} \right] + \sum_{t=1}^{T} \mathbb{E}\left[l_{t}(\alpha^{\prime}_{t}) - l_{t}(\alpha^{\prime}_{t+1})\right]+(T+1)G\epsilon, \label{second}
\end{align}
where $\alpha^{\ast} \in \arg\min \{\sum_{t=1}^{T} l_{t}(\alpha),\,\alpha \in\mathcal{A}\}$.    
\end{lemma}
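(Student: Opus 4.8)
The plan is to start from the ``be-the-leader'' inequality underlying~(\ref{Base})---namely $\sum_{t=0}^{T} l_{t}(\alpha^{\prime}_{t+1}) \leq \min_{\alpha\in\mathcal{A}}\sum_{t=0}^{T} l_{t}(\alpha) + (T+1)G\epsilon$, obtained by cancelling the common term $\sum_{t=0}^{T} l_{t}(\alpha^{\prime}_{t})$ from both sides of~(\ref{Base})---and then to peel off the index $t=0$, which is the only place the single perturbation $l_{0}(\alpha) = -\sigma\alpha$ enters. Using $\sum_{t=0}^{T} l_{t}(\cdot) = -\sigma(\cdot) + \sum_{t=1}^{T} l_{t}(\cdot)$, the left-hand side becomes $-\sigma\alpha^{\prime}_{1} + \sum_{t=1}^{T} l_{t}(\alpha^{\prime}_{t+1})$, since $\alpha^{\prime}_{1}$ is the round-$1$ iterate (the leader after seeing $l_{0}$ alone).

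Next I would bound the comparator on the right. Let $\alpha^{\ast} \in \arg\min_{\alpha\in\mathcal{A}}\sum_{t=1}^{T} l_{t}(\alpha)$ be the unperturbed offline optimum over the continuous set $\mathcal{A}$. Evaluating the perturbed hindsight objective at $\alpha^{\ast}$ gives $\min_{\alpha\in\mathcal{A}}\left[-\sigma\alpha + \sum_{t=1}^{T} l_{t}(\alpha)\right] \leq -\sigma\alpha^{\ast} + \min_{\alpha\in\mathcal{A}}\sum_{t=1}^{T} l_{t}(\alpha)$. This is the one substantive step: it isolates the perturbation's contribution in exactly the form $\sigma(\alpha^{\prime}_{1}-\alpha^{\ast})$ that is controlled later via the exponential distribution, and it requires no additional reasoning about the gap between $\mathcal{A}$ and $\mathcal{A}^{\prime}$ precisely because the discretization slack $(T+1)G\epsilon$ is already being carried along from~(\ref{Base}).

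Plugging both expansions into the be-the-leader inequality and adding $\sigma\alpha^{\prime}_{1}$ to both sides yields $\sum_{t=1}^{T} l_{t}(\alpha^{\prime}_{t+1}) \leq -\sigma\alpha^{\ast} + \sigma\alpha^{\prime}_{1} + \min_{\alpha\in\mathcal{A}}\sum_{t=1}^{T} l_{t}(\alpha) + (T+1)G\epsilon$. I would then replace the index-shifted sum by $\sum_{t=1}^{T} l_{t}(\alpha^{\prime}_{t})$ by adding the ``stability'' sum $\sum_{t=1}^{T}\left[l_{t}(\alpha^{\prime}_{t}) - l_{t}(\alpha^{\prime}_{t+1})\right]$ to both sides and subtracting $\min_{\alpha\in\mathcal{A}}\sum_{t=1}^{T} l_{t}(\alpha)$, producing the pathwise bound
\[
\sum_{t=1}^{T} l_{t}(\alpha^{\prime}_{t}) - \min_{\alpha\in\mathcal{A}}\sum_{t=1}^{T} l_{t}(\alpha) \leq -\sigma\alpha^{\ast} + \sigma\alpha^{\prime}_{1} + \sum_{t=1}^{T}\left[l_{t}(\alpha^{\prime}_{t}) - l_{t}(\alpha^{\prime}_{t+1})\right] + (T+1)G\epsilon.
\]
Taking expectation over the single draw $\sigma\sim\exp(\eta)$, using linearity of expectation to split the stability sum while the deterministic term $(T+1)G\epsilon$ passes through unchanged, gives exactly~(\ref{second}).

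Almost all of this is bookkeeping---cancellation of the $t=0$ contributions, one index shift, and a telescoping insertion. The only step that needs a moment's thought is the comparator bound in the second paragraph: recognizing that folding the perturbation into the hindsight objective and then evaluating at the \emph{unperturbed} optimum $\alpha^{\ast}$ is what surfaces the term $\mathbb{E}[-\sigma\alpha^{\ast} + \sigma\alpha^{\prime}_{1}]$ in precisely the form required, so that the subsequent analysis can bound it by $O(1/\eta)$ using moments of the exponential distribution.
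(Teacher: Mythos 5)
Your proposal is correct and follows essentially the same route as the paper: both start from the combination of the discretization and follow-the-leader lemmas (inequality (\ref{Base}) extended to include $l_{0}=-\sigma\alpha$), upper-bound the perturbed hindsight minimum by evaluating it at the unperturbed optimizer $\alpha^{\ast}$, cancel the $t=0$ contributions to surface $-\sigma\alpha^{\ast}+\sigma\alpha^{\prime}_{1}$, and take expectation. The only difference is that you spell out the rearrangement that the paper compresses into ``taking the $l_{0}$ out.''
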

\begin{proof}
Using the result of Lemma \ref{A1} and \ref{A2}, we have 
    \[
    \sum_{t=0}^{T} l_{t}(\alpha_{t}^{\prime}) - \min_{\alpha\in\mathcal{A}}\sum_{t=0}^{T} l_{t}(\alpha) - (T+1)G\epsilon \leq \sum_{t=0}^{T} l_{t}(\alpha_{t}^{\prime}) - \min_{\alpha^{\prime}\in\mathcal{A}^{\prime}}\sum_{t=0}^{T} l_{t}(\alpha^{\prime}) \leq \sum_{t=0}^{T} l_{t}(\alpha_{t}^{\prime}) - \sum_{t=0}^{T} l_{t}(\alpha_{t+1}^{\prime}).
    \]
    By rearranging the above terms (taking the $l_0$ out), and taking expectation over both sides of the inequality, we can get the desired result.
\end{proof}
\begin{proposition}
    The first two terms at the right hand side of the inequality (\ref{second}) have a complexity bound $O(T^{1/2})$.
\end{proposition}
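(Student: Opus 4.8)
The plan is to start from inequality~\eqref{second}, bound each of its first two terms by a quantity involving the exponential-perturbation parameter $\eta$, and then pick $\eta = \Theta(T^{-1/2})$ so that the sum of these two bounds is $O(T^{1/2})$; the third term $(T+1)G\epsilon$ is handled separately in Theorem~4 via $\epsilon = O(T^{-1/2})$.

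For the first term, write $\mathbb{E}[-\sigma\alpha^{\ast} + \sigma\alpha_1^{\prime}] = \mathbb{E}[\sigma(\alpha_1^{\prime} - \alpha^{\ast})]$. The single perturbation $\sigma \sim \exp(\eta)$ is nonnegative almost surely, and both $\alpha_1^{\prime}$ and $\alpha^{\ast} = \arg\min_{\alpha\in\mathcal{A}}\sum_{t=1}^{T} l_t(\alpha)$ lie in the compact set $\mathcal{A} \subseteq [0,1]$, so $\alpha_1^{\prime} - \alpha^{\ast} \le \mathrm{diam}(\mathcal{A}) \le 1$ deterministically. Hence $\sigma(\alpha_1^{\prime} - \alpha^{\ast}) \le \sigma$ pointwise, and taking expectations gives $\mathbb{E}[\sigma(\alpha_1^{\prime} - \alpha^{\ast})] \le \mathbb{E}[\sigma] = 1/\eta = O(1/\eta)$; note that no independence between $\sigma$ and $\alpha^{\ast}$ is needed, and one may keep the factor $\mathrm{diam}(\mathcal{A})$ explicit for a sharper constant.

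For the second term, observe that $\alpha_t^{\prime}$ and $\alpha_{t+1}^{\prime}$ are the minimizers over $\mathcal{A}^{\prime}$ of $\sum_{i=0}^{t-1} l_i$ and of $\sum_{i=0}^{t} l_i$ respectively, the single noise being carried by $l_0 = -\sigma\alpha$, so that $\alpha_{t+1}^{\prime}$ is the ``leader'' after the extra loss $l_t$ has been appended. By the preceding lemma each $l_t$ is $G$-Lipschitz, and since the $Q$-values produced by the ensemble networks --- hence the epistemic uncertainty distributions $X_t$, hence $\rho_\alpha(X_t)$ --- are uniformly bounded, the losses $l_t$ are uniformly bounded by some constant $B$. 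Under exactly these hypotheses the stability analysis of Follow-the-Perturbed-Leader with exponential perturbation --- the argument behind the $O(T^{1/2})$ guarantee of \cite{agarwal2019learning} --- yields a per-round bound $\mathbb{E}[l_t(\alpha_t^{\prime}) - l_t(\alpha_{t+1}^{\prime})] \le c\,\eta$, with $c$ depending only on $G$, $B$, and $\mathrm{diam}(\mathcal{A})$: appending one $G$-Lipschitz loss tilts the cumulative objective over $\mathcal{A}^{\prime}$ by at most $G\,\mathrm{diam}(\mathcal{A})$, a perturbation of that magnitude changes the density of the exponential noise only by a factor $1+O(\eta)$, and $l_t$ is itself bounded by $B$. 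Summing over $t=1,\dots,T$ gives $\sum_{t=1}^{T}\mathbb{E}[l_t(\alpha_t^{\prime}) - l_t(\alpha_{t+1}^{\prime})] \le c\,\eta\,T$.

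Finally, adding the two estimates, the first two terms of~\eqref{second} are at most $1/\eta + c\,\eta\,T$; with the standard choice $\eta = 1/\sqrt{cT} = \Theta(T^{-1/2})$ this becomes $2\sqrt{cT} = O(T^{1/2})$, which is the claim, and this is consistent with the order of $\eta$ that also makes $(T+1)G\epsilon = O(T^{1/2})$ under $\epsilon = O(T^{-1/2})$. I expect the genuine obstacle to be the per-round stability estimate $\mathbb{E}[l_t(\alpha_t^{\prime}) - l_t(\alpha_{t+1}^{\prime})] = O(\eta)$ for losses that are merely Lipschitz and not convex: unlike the linear-loss case, the appended loss $l_t$ cannot be absorbed into a shift of the perturbation, so one must compare the laws of the two perturbed minimizers on $\mathcal{A}^{\prime}$ directly --- this is the non-convex FTPL ingredient imported from \cite{agarwal2019learning}, and the rest of the argument is routine bookkeeping.
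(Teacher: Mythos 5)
Your decomposition and final balancing are exactly the paper's: the first term is bounded by $\mathbb{E}[\sigma]=1/\eta$ using only nonnegativity of $\sigma$ and $\mathrm{diam}(\mathcal{A})\leq 1$, the second by $O(\eta G^{2}T)$, and $\eta=\Theta(T^{-1/2})$ closes the argument. The one place where your write-up diverges from what actually has to be proved is the per-round stability estimate $\mathbb{E}[l_t(\alpha'_t)-l_t(\alpha'_{t+1})]=O(\eta)$. The mechanism you sketch for it --- appending $l_t$ tilts the objective by at most $G\,\mathrm{diam}(\mathcal{A})$, a shift of that size changes the exponential density only by a factor $1+O(\eta)$, combined with a uniform bound $B$ on the losses --- is the linear-loss (Kalai--Vempala) argument, and, as you concede in your closing paragraph, it does not transfer to non-linear $l_t$ because the appended loss cannot be absorbed into the perturbation. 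The paper does not patch this by comparing the laws of the two perturbed minimizers either; it exploits the one-dimensionality of $\mathcal{A}$. From the elementary inequality $f_1(\alpha_1(\sigma_1))-f_2(\alpha_2(\sigma_2))\leq(\sigma_1-\sigma_2)\bigl(\alpha_1(\sigma_1)-\alpha_2(\sigma_2)\bigr)$ applied with $\sigma_1=\sigma+2G$, it derives the monotonicity relations $\alpha_t(\sigma+2G)\geq\alpha_{t+1}(\sigma)$ (and their companions), and then an explicit integration against the exponential density yields $\mathbb{E}[|\alpha'_t-\alpha'_{t+1}|]\leq 1-\exp(-2\eta G)\leq 2\eta G$; the per-round bound $\mathbb{E}[l_t(\alpha'_t)-l_t(\alpha'_{t+1})]\leq 2\eta G^{2}$ then follows from $G$-Lipschitzness alone --- no uniform bound $B$ on the losses is used, only the bounded domain. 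So your proposal is correct in outline and lands on the same rate, but the genuinely non-trivial step is asserted by citation with a heuristic that is the wrong mechanism for this setting; the paper's proof of the proposition consists precisely of supplying the correct, monotonicity-based argument for that step.
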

\begin{proof}
     The proof is inspired by the proof of Theorem 1 in \citet{agarwal2019learning}. For any two functions $f_1,\,f_2: \mathcal{A}^\prime \rightarrow \mathbb{R}$ and scalar $\sigma_1,\,\sigma_2 \in \mathbb{R}$, let 
    \[
    \alpha_i(\sigma_i) \in \arg \min_{\alpha \in \mathcal{A}^\prime} \left\{f_i(\alpha) - \sigma_i \alpha\right\},\,i=1,2.
    \]
    Let $f = f_1 - f_2$ and $\sigma = \sigma_1 - \sigma_2$, we have that 
    \[
    f_1(\alpha_1(\sigma_1)) - \sigma_1 \alpha_1(\sigma_1) \leq   f_1(\alpha_2(\sigma_2)) - \sigma_1 \alpha_2(\sigma_2), 
    \]
    and 
    \[
    f_2(\alpha_2(\sigma_2)) - \sigma_2 \alpha_2(\sigma_2) \leq   f_1(\alpha_1(\sigma_1)) - \sigma_2 \alpha_1(\sigma_1). 
    \]
    By adding the above two inequalities and rearranging, we have 
    \[
    f_1(\alpha_1(\sigma_1)) -  f_2(\alpha_2(\sigma_2)) \leq \sigma (\alpha_1(\sigma_1)- \alpha_2(\sigma_2)).
    \]
    For any round $t$, consider substituting $f_1(\alpha) = \sum_{i<t} l_i(\alpha),\,f_2(\alpha) = \sum_{i<t+1} l_i(\alpha),\,\sigma_2 = \sigma$ and $\sigma_1 = \sigma^\prime = \sigma + 2G$. We immediately get that 
    \[
    l_t(\alpha_t(\sigma^\prime)) - l_t(\alpha_{t+1}(\sigma^\prime)) \leq 2G (\alpha_t(\sigma^\prime) - \alpha_{t+1}(\sigma)).
    \]
    Using the fact that $l_t$ is $G$-Lipschitz, we get that 
    \[
    -G|\alpha_t(\sigma^\prime)-\alpha_{t+1}(\sigma)| \leq l_t(\alpha_t(\sigma^\prime)) - l_t(\alpha_{t+1}(\sigma^\prime)) \leq 2G (\alpha_t(\sigma^\prime) - \alpha_{t+1}(\sigma)),
    \]
    which immediately implies that $\alpha_t(\sigma^\prime) \geq \alpha_{t+1}(\sigma)$. Similar calculations show that $\alpha_{t+1}(\sigma^\prime) \geq \alpha_{t}(\sigma)$ and $\alpha_{t}(\sigma^\prime) \geq \alpha_{t}(\sigma)$. In the following proof, we omit the dependence on $t$. We denote by $\alpha_{\min}(\sigma) = \min\{\alpha_{t}(\sigma),\,\alpha_{t+1}(\sigma)\},\,\alpha_{\max}(\sigma) = \max\{\alpha_{t}(\sigma),\,\alpha_{t+1}(\sigma)\}$. First we observe that 
    \[
    \mathbb{E}[|\alpha_{t}(\sigma)-\alpha_{t+1}(\sigma)|] =  \mathbb{E}[\alpha_{\max}(\sigma)] - \mathbb{E}[\alpha_{\min}(\sigma)].
    \]
    Secondly the computation above implies that
    \begin{equation}
     \alpha_{\min}(\sigma^\prime) \geq \alpha_{\max}(\sigma). \label{minmax}
    \end{equation}
    Letting $\sigma^\prime = \sigma +2G$, we have 
    \begin{align*}
    \mathbb{E}[\alpha_{\min}(\sigma)] = & \int_{\sigma = 0}^{2G} \eta \exp(-\eta \sigma) \alpha_{\min}(\sigma)d\sigma + \int_{\sigma > 2G} \eta \exp{-\eta \sigma} \alpha_{\min}(\sigma) d \sigma 
    \\
    \geq & (1-\exp(-2\eta G))(\mathbb{E}[\alpha_{\max}(\sigma)]-1) + \int_{\sigma >0} \eta \exp(-\eta\sigma^\prime) \alpha_{\min}(\sigma^\prime)d\sigma
    \\
    \geq & (1-\exp(-2\eta G))(\mathbb{E}[\alpha_{\max}(\sigma)]-1) + \int_{\sigma >0} \eta \exp(-\eta\sigma^\prime) \alpha_{\max}(\sigma)d\sigma
    \\
    = & (1-\exp(-2\eta G))(\mathbb{E}[\alpha_{\max}(\sigma)]-1)  + \exp(-2\eta G)\mathbb{E}[\alpha_{\max}(\sigma)]
    \\
    = & \mathbb{E}[\alpha_{\max}(\sigma)] - (1-\exp(-2\eta G)) \geq \mathbb{E}[\alpha_{\max}(\sigma)] - 2\eta G,
    \end{align*}
    where the second inequality uses inequality (\ref{minmax}) and and the last inequality uses the inequality $\exp(x) \geq 1+x$.

    Finally we have that 
    \begin{align*}
     \mathbb{E}\left[-\sigma \alpha ^{\ast} + \sigma \alpha^{\prime}_{1} \right] + \sum_{t=1}^{T} \mathbb{E}\left[l_{t}(\alpha^{\prime}_{t}) - l_{t}(\alpha^{\prime}_{t+1})\right] 
     \leq & \mathbb{E}[\sigma] + G \sum_{t=1}^{T} \mathbb{E}[\|\alpha^{\prime}_{t} -\alpha^{\prime}_{t+1}\|_{1}]   
     \\
     \leq 1/\eta + 2\eta G^2 T.
    \end{align*}
And by choose $\eta = T^{-1/2}$, we get the desired result.
\end{proof}
Finally, if we choose $\epsilon = O(T^{-1/2})$ in inequality (\ref{second}), then the whole expected regret preserves the $O(T^{1/2})$ complexity. We thus complete the proof of Theorem 4.
\\
\\
\textsc{Proof of Theorem 5}: From Problem (4) in main paper, we have 
\begin{align*}
& \max\{1-\alpha:\,\textrm{CVaR}_{\alpha}(X_{t-1}-\tau_{t})\leq0\}
\\
=	&\max\left\{ 1-\alpha:\,\exists y\leq0:\,y+\alpha^{-1}\mbox{\ensuremath{\mathbb{E}}}\left[\left(X_{t-1}-\tau_{t}-y\right)_{+}\right]\leq0\right\} 
\\
=	&\max\left\{ 1-\alpha:\,\exists y\leq0:\,-1 + \alpha^{-1}\mbox{\ensuremath{\mathbb{E}}}\left[\left(-(X_{t-1}-\tau_{t})/y+1\right)_{+}\right]\leq0\right\} 
\\
=	&\max\left\{ 1-\alpha:\,\exists y\leq0:\,1-\alpha\leq1-\mbox{\ensuremath{\mathbb{E}}}\left[\left(-(X_{t-1}-\tau_{t})/y+1\right)_{+}\right]\right\}
\\
=	&\max_{b\geq0}\left\{ 1-\mbox{\ensuremath{\mathbb{E}}}\left[\left(b(X_{t-1}-\tau_{t})+1\right)_{+}\right]\right\} .
\end{align*}
Thus problem $\max\left\{ \alpha:\,\textrm{CVaR}_{\alpha}(X_{t-1})\leq\tau_{t}\right\}$ is equivalent to 
\begin{equation}
\min_{b\geq0}\mathbb{E}\left[\left(b(X_{t-1}-\tau_{t})+1\right)_{+}\right]. \label{minimization}
\end{equation}
\textsc{Proof of Theorem 6}: 
Problem (\ref{minimization}) can be approximated by a constrained linear program with $O(K)$ decision variables and $O(K)$ constraints, using sample average approximation. Define $\boldsymbol{c} \in \mathbb{R}^K$, and we have:
\begin{align*}
    \min_{\boldsymbol{c},b\geq0} ~~ & \frac{1}{K} \sum_{k=1}^{K} c_k
    \\
    \text{s.t.} ~~  & c_k \geq b(Q_{t-1}^{\theta_k}-\tau_{t})+1, &\forall k = 1,...,K,
    \\
    & c_k \geq 0, &\forall k = 1,...,K.
\end{align*}
where $Q_{t-1}^{\theta_k}$ is the output of the $k$-th ensemble network at period $t-1$ (we omit its dependence on state-action pair for simplicity). Using the structure of Problem (\ref{minimization}), we can further reduce the computation. The approximation of Problem (\ref{minimization}) is given by 
\begin{equation}
\min_{b\geq0}\frac{1}{K}\sum_{k=1}^{K}\left[b(Q_{t-1}^{\theta_k}-\tau_{t})+1\right]_{+}, \label{Transmin}
\end{equation}
which contains the sum of positive part functions where each positive part has slope $Q_{t-1}^{\theta_k}-\tau_{t}$ and each function pass the point $(0,\,1)$. Problem (\ref{Transmin}) can be graphically illustrated by a simple case in Figure~\ref{fig:transmin}, where there are exactly two functions, both with positive and negative slopes,  respectively. Given $b > 0$, suppose $S_1 + S_2 \geq S_3 + S_4 \geq 0$, then we must have $\alpha_t = 1$, otherwise, we will find the smallest $b$-intercept (i.e., $b_1$ and $b_2$), such that the sum of function gaps below 1 is lower than that above 1. Here we would check if at $b_2$, that $S_1 + S_2 > S_3$ holds. 
\begin{figure}[h]
    \centering
    \includegraphics[width=0.55\linewidth]{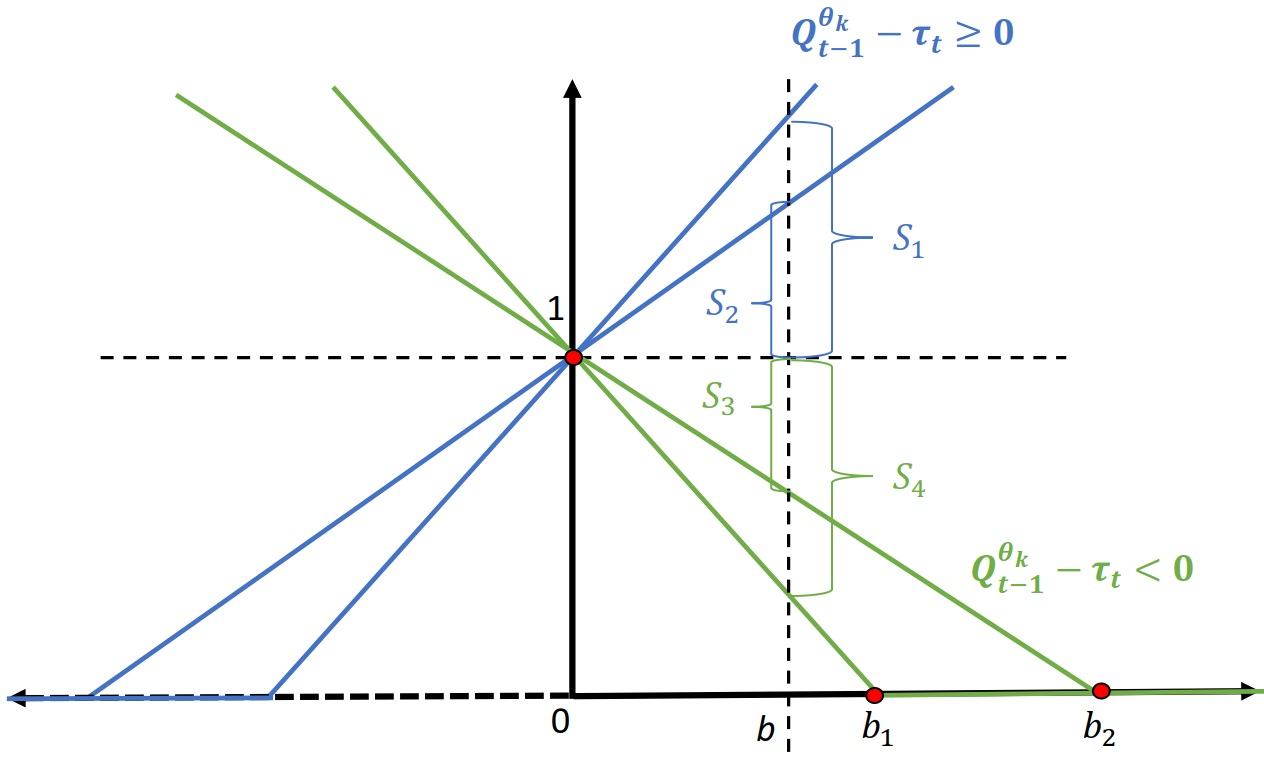}
    \caption{Graphic Illustration of Problem (\ref{Transmin})}
    \label{fig:transmin} 
\end{figure}

By generalizing the above idea into the general case, we can design a search algorithm: Algorithm 3 to find $\alpha_t$ efficiently. 

\begin{algorithm}[H]
\caption{Search Algorithm for $\alpha_t$}\label{Search Algorithm}
\begin{algorithmic}
\STATE Denote the index set $[K^\prime]$ where for each $k\in [K^\prime]$ where $Q^{\theta_k}_{t-1} - \tau_t < 0$. 

\IF{$\sum_{k\in [K]/[K^\prime]} (Q^{\theta_k}_{t-1} - \tau_t) \geq \sum_{k\in [K^\prime]} (\tau_t-Q^{\theta_k}_{t-1})$,}
\STATE Set $\alpha_t = 1$, 
\ELSE
\STATE Redefine and reorder the elements (the $b$-intercepts) in $[K^\prime]$ such that 
\[\frac{1}{\tau_t-Q^{\theta_1}_{t-1}} \leq \cdots \leq \frac{1}{\tau_t-Q^{\theta_{|[K^\prime]|}}_{t-1}}.\]
Then, $b^\ast = 1/(\tau_t-Q^{\theta_{k^\ast}}_{t-1})$, where $k^\ast \in\{1,...,|[K^\prime]|\}$ and \[
\sum_{i=1}^{k^{\ast}+1} (\tau_t-Q^{\theta_{k^\ast}}_{t-1}) \leq \sum_{k\in [K]/[K^\prime]} (Q^{\theta_k}_{t-1} - \tau_t)<\sum_{i=1}^{k^{\ast}} (\tau_t-Q^{\theta_{k^\ast}}_{t-1}).
\]
\ENDIF
\end{algorithmic}
\end{algorithm}

The most computational consuming step in Algorithm is the sorting $b$-intercepts in the set $[K^\prime]$ which has complexity $O(K\log(K))$, which much lower than that of the best known algorithm for linear program $O(K^{2.5})$ (see \citet{vaidya1989speeding}).

\section{Supplementary Materials for Experiments}
\begin{figure}
    \centering
    \includegraphics[width=\linewidth]{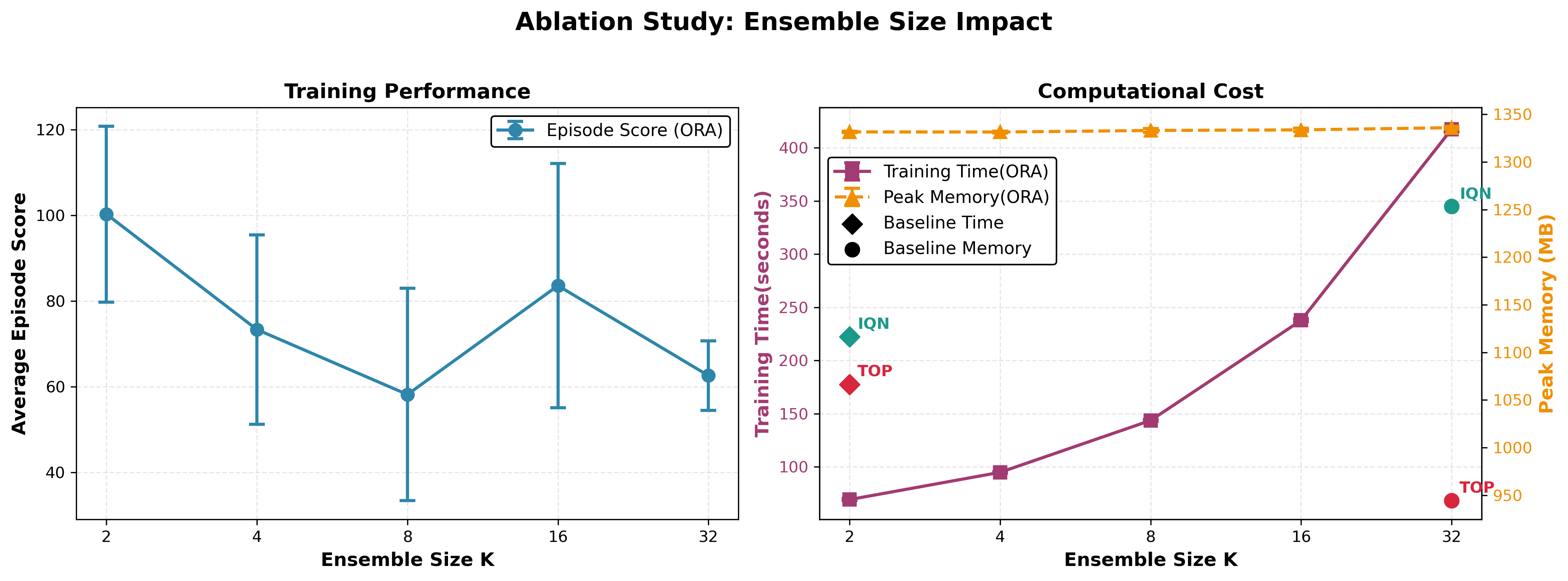}
    \caption{{Impact of ensemble size on ORA's training performance and computational cost in CartPole environment. Results averaged over 3 runs per configuration (N = 2, 4, 8, 16, 32). Error bars show standard deviation. The time and memory overheads of IQN and TOP are indicated on the left and right sides of the second subplot, respectively.}}
    \label{fig:ablation k}
\end{figure}

\begin{figure}[t]
\centering
    \includegraphics[width=0.5\textwidth]{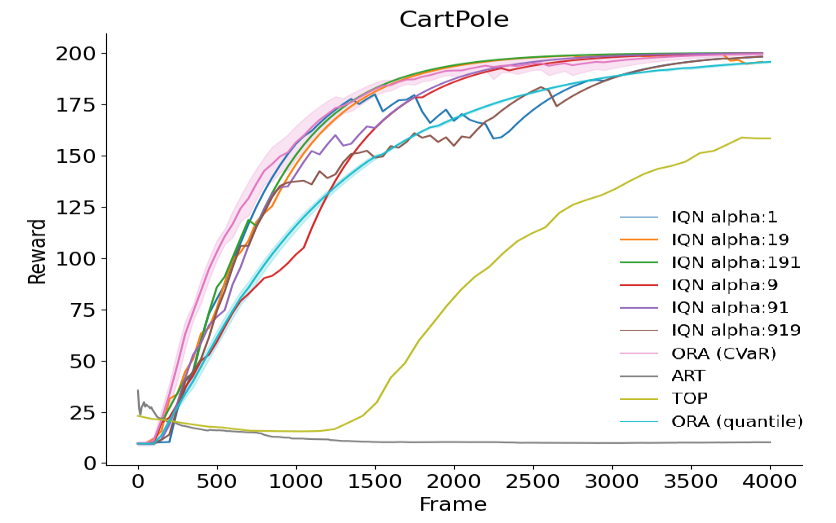}
    \caption{ORA with 90\% confidence interval.}
    \label{fig:CartPole}
\end{figure}

\begin{figure}[htbp]
    \centering
    \begin{subfigure}[b]{0.5\textwidth}
        \includegraphics[width=\textwidth]{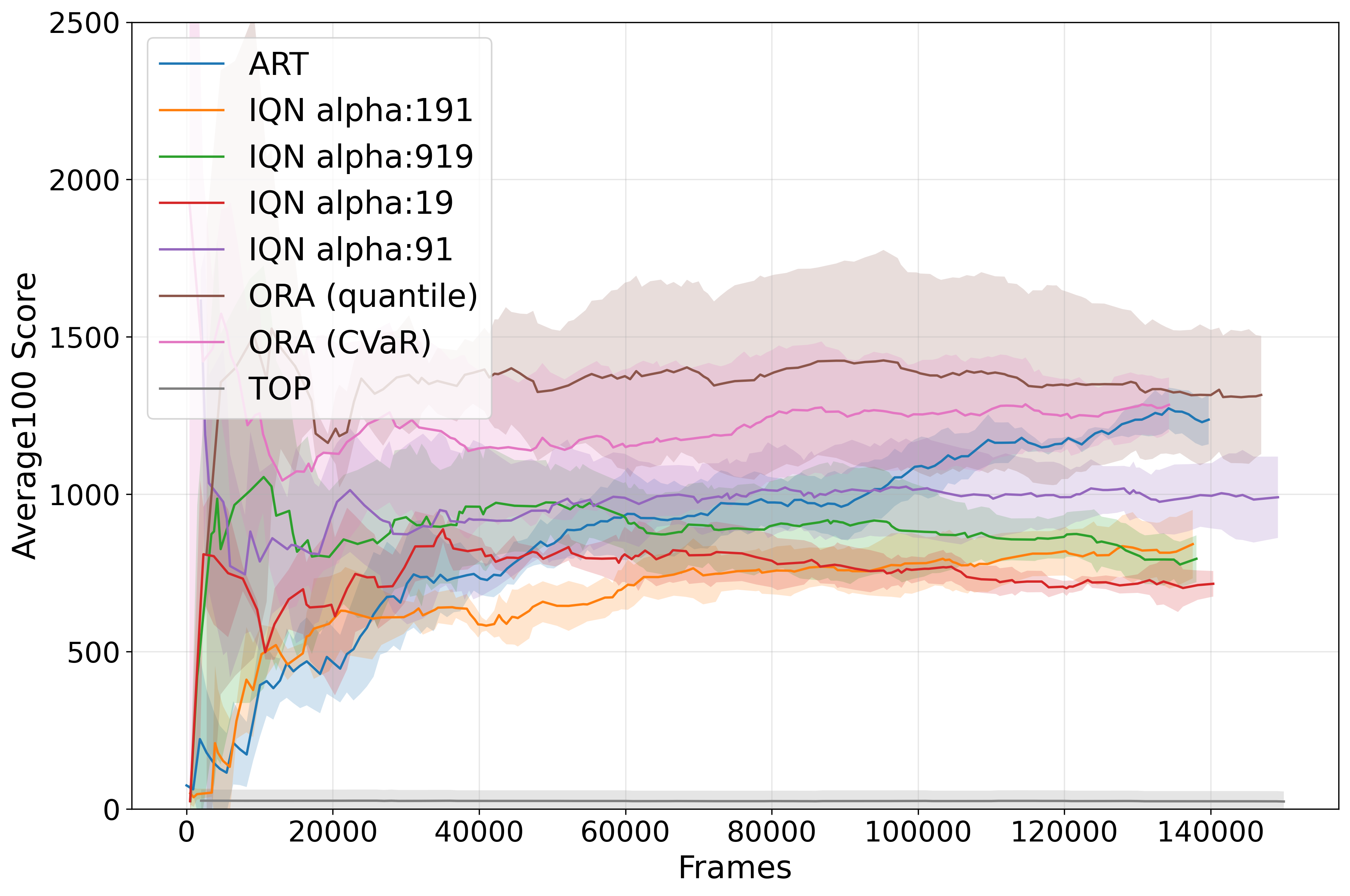}
        \caption{Hero}
        \label{fig:image_a}
    \end{subfigure}
    \hfill
    \begin{subfigure}[b]{0.5\textwidth}
        \includegraphics[width=\textwidth]{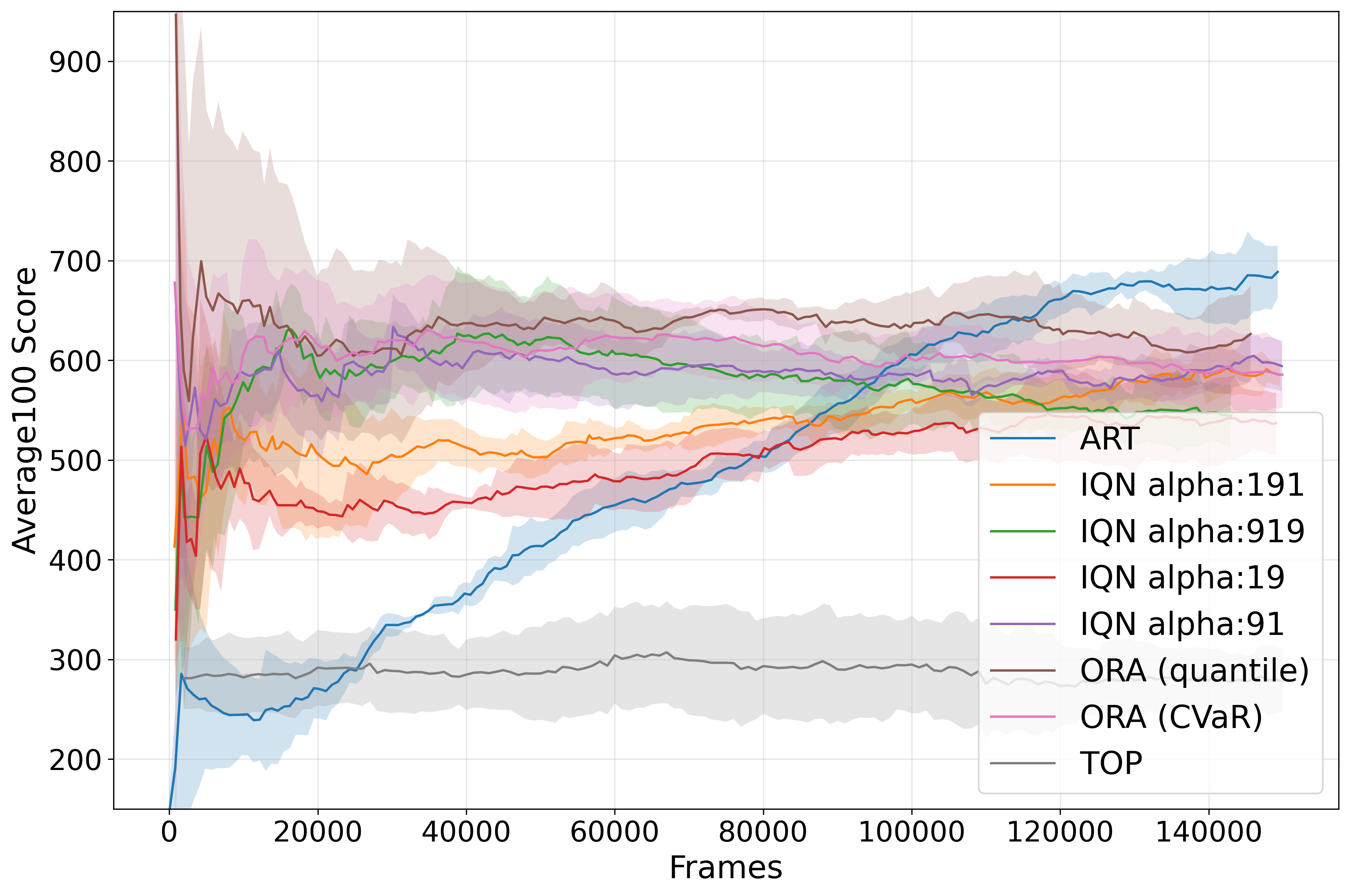}
        \caption{MsPacman}
        \label{fig:image_b}
    \end{subfigure}
    \hfill
    \begin{subfigure}[b]{0.5\textwidth}
        \includegraphics[width=\textwidth]{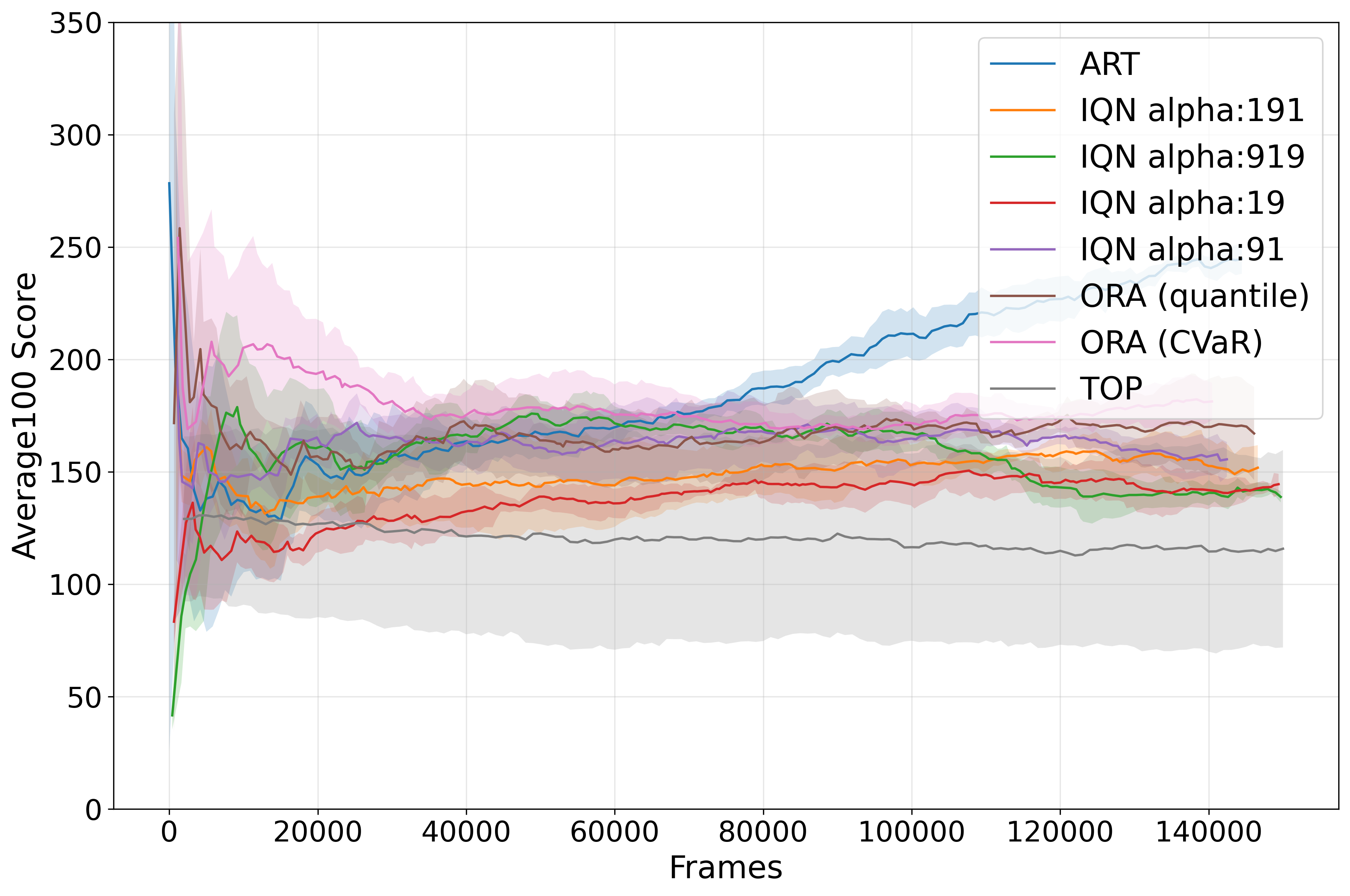}
        \caption{SpaceInvaders}
        \label{fig:image_c}
    \end{subfigure}
    \caption{The average score of last 100 episodes curves (mean $\pm$ std) of ORA, ART, TOP, and baseline algorithms across three Atari environments over 3 seeds.}
    \label{fig:three_images}
\end{figure}

{
\paragraph{Computational Test} 
Although the ensemble network is inseparable from our method, making it impossible to investigate the ablation improvement of the ensemble network, the test on ensemble size $K$ demonstrates that the ensemble network itself does not exhibit a significant correlation with performance improvement (Figure~\ref{fig:ablation k}).
Further computational tests reveal that the time overhead grows only slightly with the ensemble size $K$, while the associated memory overhead is negligible. This demonstrates that introducing the ensemble network does not incur a substantial computational burden. As a reference, we have also included the average time and space consumption of the TOP and IQN methods (the ensemble size is not applicable). The comparison further confirms that our method only requires slightly higher memory usage, and with the default $K$, the additional computational time is insignificant.
Note that aforementioned experiments are conducted on the same GPU-accelerated device.
Despite variations in model architectures of these approaches and consequently their hyperparameter configurations (which are specified in the following subsection), a uniform batch size of 64 was employed for all methods, with each using the parameter set determined to be optimal for it.
}

{
Table~\ref{Relation between alpha and epistemic uncertainty} records ORA’s left-truncated variance (LTV), risk parameters, and median epistemic uncertainty per training period for all three experiments. The experimental results of ORA (with distortion function CVaR and quantile) and baselines are shown in Figure~\ref{fig:three_images}.
Across three diverse Atari environments, our ORA method demonstrates statistically significant dominance ($p$ < 0.001), achieving massive to large Rank-Biserial effect sizes against the TOP baseline (0.996 in Hero, 0.875 in MsPacman, 0.566 in SpaceInvaders) and substantial effects against ART (0.672, 0.511, 0.388, respectively).
For these three environments, the discretization of $\mathcal{A}^{\prime}$ follows a grid size accuracy of 5, while in Nano Drone and knapsack, the grid size is bucketed to 1. The perturbation parameter $\eta$ for the FTPL algorithm is set to 0.5 in all experiments.
}
The hyper-parameter settings for CartPole, Atari games, Knapsack and Nano Drone Navigation tasks are listed in Table~\ref{Supp:Hyper-parameters C},~\ref{Supp:Hyper-parameters} and~\ref{Supp:Hyper-parameters N}. Parameters not listed are as default in IQN. The implementation and hyper-parameter settings of ART and TOP in Atari games and Knapsack environments follows the open source \citep{moskovitz2021tactical,liu2022adaptive}. The experiment was conducted on a Windows platform with an Intel i5-13400F and an Nvidia RTX 5060. The dependencies and libraries required for each task can be found in the code appendix.


\begin{table*}[!t]
  \caption{Relation between risk parameters and epistemic uncertainty}
  \label{Relation between alpha and epistemic uncertainty}
  \centering
  \begin{small}
  \begin{tabular}{lccc}
    \toprule
     ~ & CartPole & Nano Drone Navigation & Knapsack \\
    \midrule
    Risk parameter $\alpha$  & \raisebox{-.5\height}{\includegraphics[width=0.22\textwidth]{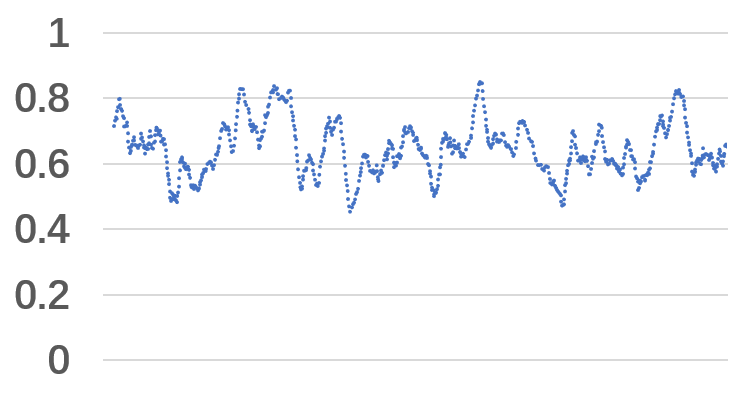}} & \raisebox{-.5\height}{\includegraphics[width=0.22\textwidth]{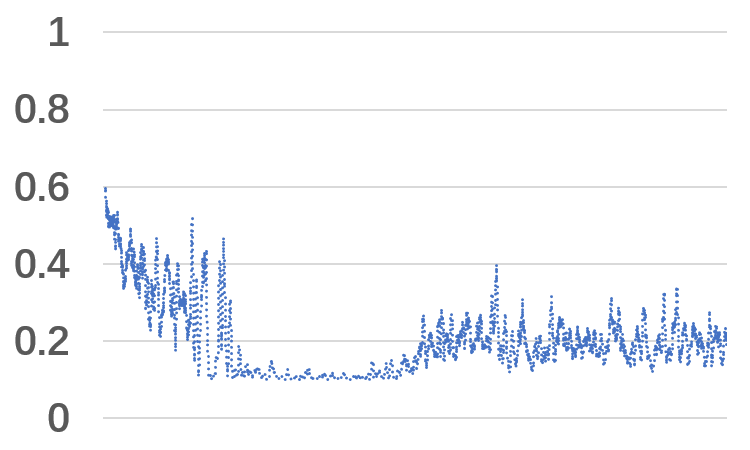}} & \raisebox{-.5\height}{\includegraphics[width=0.22\linewidth]{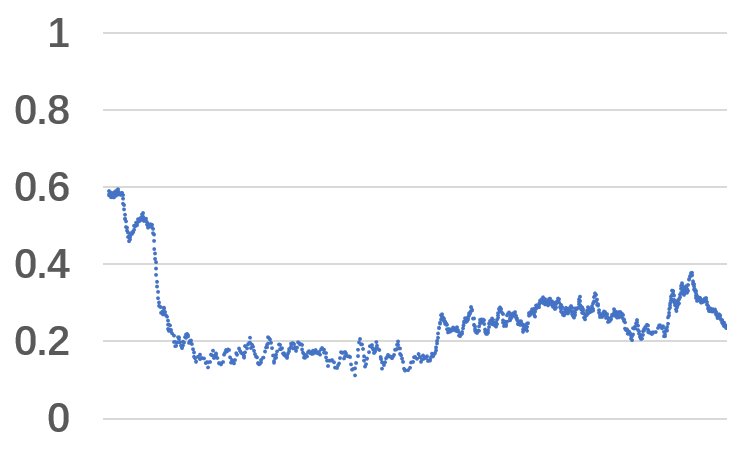}} \\
    \midrule
Epistemic uncertainty (LTV) & \raisebox{-.5\height}{\includegraphics[width=0.22\textwidth]{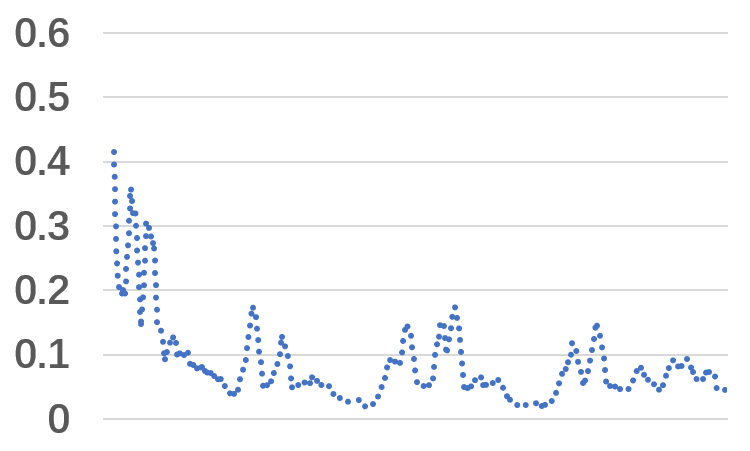}} & \raisebox{-.5\height}{\includegraphics[width=0.22\textwidth]{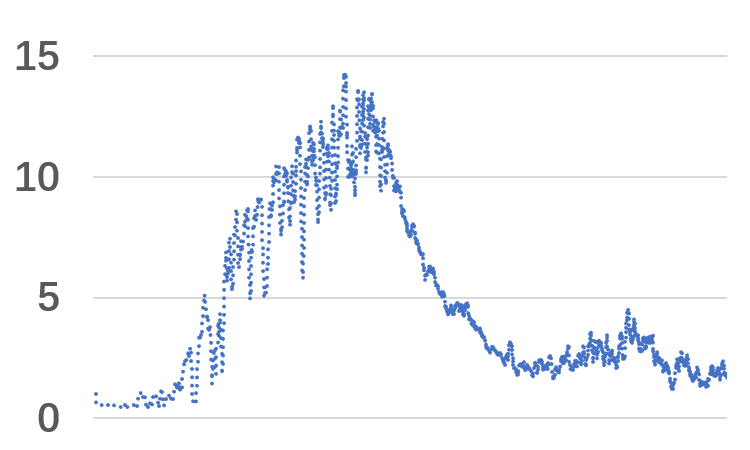}} & \raisebox{-.5\height}{\includegraphics[width=0.22\linewidth]{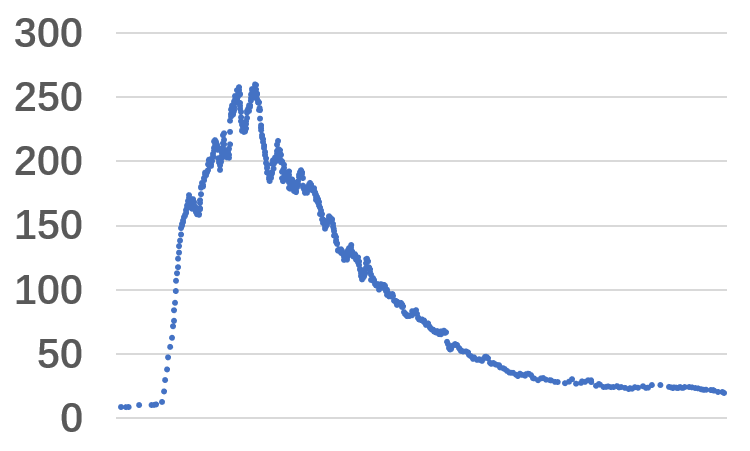}} \\
    \bottomrule
  \end{tabular}
  \end{small}
\end{table*}

  \begin{table}[h]
  \caption{Hyper-parameters in CartPole and Atari game environments}
  \label{Supp:Hyper-parameters C}
  \centering
  \begin{tabular}{ll}
    \toprule
    Parameter     & Value \\
    \midrule
    Optimizer & Adam \\
    Learning rate & 0.03 \\
    Batch size & 8 \\
    Replay buffer size & 100000 \\
    Target network update speed & 1 \\
    Discount factor & 0.99 \\
    Epsilon greedy & 0.1 (random action) \\
    Ensemble networks & 32\\
    Layer size & 256 \\
    Number of distortion samplings & 64\\
    Number of samplings for quantile loss approximation & 8\\
    \bottomrule
  \end{tabular}
\end{table}

  \begin{table}[H]
  \caption{Hyper-parameters in Knapsack}
  \label{Supp:Hyper-parameters}
  \centering
  \begin{tabular}{ll}
    \toprule
    Parameter     & Value \\
    \midrule
    Optimizer & Adam \\
    Learning rate & 1e-3 \\
    Batch size & 32 \\
    Replay buffer size & 4096 \\
    Target network update speed & 0.01 \\
    Discount factor & 1.0 \\
    Epsilon greedy & 0.1 (random action) \\
    Ensemble networks & 10 \\
    Layer size & 64 \\
    Number of episodes & 50000 \\
    \bottomrule
  \end{tabular}
\end{table}

   \begin{table}[H]
  \caption{Hyper-parameters in Nano Drone Navigation}
  \label{Supp:Hyper-parameters N}
  \centering
  \begin{tabular}{ll}
    \toprule
    Parameter     & Value \\
    \midrule
    Optimizer & Adam \\
    Learning rate & $2\times 10^{-4}$ \\
    Experience replay buffer with size & $6\times 10^4$ \\
    Discount factor & 0.99 \\
    State update period & 0.1 $[s]$ \\
    Maximum velocity & 1 $[m/s]$ \\
    Radius of drone & 0.05 $[m]$ \\
    Goal distance & 6 $[m]$\\
    Goal-reaching threshold & 0.5 $[m]$ \\
    Safety margin & 0.2 $[m]$ \\
    Number of episodes & 1000 \\
    Number of distortion samplings & 64\\
    Batch size & 32\\
    Maximum time-steps & 200 \\
    Number of samplings for quantile loss approximation & 16\\
    Number of velocity magnitudes & 3\\
    Lower bound on CVaR risk parameter & 0.1\\
    Learning rate in ART & 0.5 \\
    Number of hidden layers & 3 \\
    Number of units per layer & 512\\
    Ensemble networks &  10 \\
    \bottomrule
  \end{tabular}
\end{table}

\end{document}